\documentclass[conference]{IEEEtran}
\IEEEoverridecommandlockouts
\overrideIEEEmargins 
\usepackage{amsmath,amssymb,amsfonts}
\usepackage{algorithm,algpseudocode}
\newtheorem{theorem}{Theorem}
\newtheorem{lemma}{Lemma}

\newtheorem{remark}{Remark}
\newtheorem{assumption}{Assumption}

\newtheorem{problem}{Problem}
\usepackage{graphicx}
\usepackage{textcomp}
\usepackage{xcolor}
\usepackage{makecell}
\usepackage{adjustbox}
\usepackage{subfigure}
\usepackage{multirow}
\usepackage[font=small,aboveskip=2pt,belowskip=-5pt]{caption}
\newcommand{\norm}[1]{\left\lVert #1 \right\rVert}
\newcommand{\R}{\mathbb R}

\usepackage[numbers]{natbib}
\usepackage{multicol}
\pdfminorversion=4
\begin{document}

\title{Uncertain Pose Estimation during Contact Tasks using Differentiable Contact Features}
\author{\authorblockN{Jeongmin Lee*,
Minji Lee* and Dongjun Lee\thanks{*equal contribution}}
\authorblockA{Department of Mechanical Engineering, IAMD and IOER, Seoul National University}
\authorblockA{Email: \{ljmlgh,mingg8,djlee\}@snu.ac.kr}}

\maketitle

\begin{abstract}
For many robotic manipulation and contact tasks, it is crucial to accurately estimate uncertain object poses, for which certain geometry and sensor information are fused in some optimal fashion. Previous results for this problem primarily adopt sampling-based or end-to-end learning methods, which yet often suffer from the issues of efficiency and generalizability.
In this paper, we propose a novel differentiable framework for this uncertain pose estimation during contact, so that it can be solved in an efficient and accurate manner with gradient-based solver. 
To achieve this, we introduce a new geometric definition that is highly adaptable and capable of providing differentiable contact features. 
Then we approach the problem from a bi-level perspective and utilize the gradient of these contact features along with differentiable optimization to efficiently solve for the uncertain pose.
Several scenarios are implemented to demonstrate how the proposed framework can improve existing methods.
\end{abstract}

\IEEEpeerreviewmaketitle

%------------------------

\section{Introduction}

Contact has always been considered the challenging part of robot manipulation.
Unlike free-space motion, 
contact constraints are complex to model, complicated to numerically solve, and difficult to find an appropriate strategy to handle well.
As a result, learning-based methods have been widely adopted in this field, with many impressive results to date \cite{akkaya2019solving,nagabandi2020deep,ibarz2021train}.
Learning-based methods are essentially sampling-based methods with forward-directed results.
That is, they involve collecting data from the actions, analyzing the results, and learning how to produce the best results.
But they are data-dependent, often produce noisy results, and generalization is difficult.
Some techniques such as domain randomization \cite{tobin2017domain} are often utilized, yet it is deemed still necessary to develop more structured and reliable methods.

From this perspective, the topic of differentiable physics has recently emerged.
By building differentiable formulation, gradient-based methods can replace many of the sampling requirements, improving generalization performance and efficiency.
As a result, these techniques have proven to be useful in a variety of applications, including trajectory optimization \cite{geilinger2020add,howell2022dojo}, policy gradient \cite{xu2021accelerated}, system identification \cite{le2021differentiable} and design optimization \cite{xu2021end}.
However, the use of differentiable modeling in contact-intensive tasks that require responding to uncertain environments, such as robot assembly and placement, has not been well addressed.

In this paper, we present a novel differentiable framework which estimates the uncertain pose during
contact tasks from sensor measurements.
Our framework has a wide range of applications, from simple external impact localization to interactive manipulation such as peg-in-hole assembly.
The main contribution of this paper is: 1) we devise a new geometry representation based on a prescribed support function which guarantees to provide differentiable contact features and their efficient computation algorithm; and 2) an efficient bi-level solution scheme based on differentiable optimization for uncertain pose estimation problem.
The proposed methods are validated against both in simulation and experiment, demonstrating the efficacy of our differentiable framework for contact tasks.

The rest of the paper is organized as follows. 
Previous studies related to our work are reviewed in Sec.~\ref{sec:relatedworks} with some preliminary materials presented in Sec.~\ref{sec:preliminary}.
Sec.~\ref{sec:problem} presents a formal formulation of the uncertain pose estimation problem in contact.
Then in Sec.~\ref{sec:diffcontact}, our novel prescribed support function based geometry model for differentiable contact feature is presented. 
Sec.~\ref{sec:uncertaintyest} describes bi-level solution scheme for the estimation, based on the geometry model provided in Sec.~\ref{sec:diffcontact} and differentiable optimization.
Various implementation scenarios with evaluations are provided in Sec.~\ref{sec:resultandeval}, followed by concluding remarks and discussions in Sec.~\ref{sec:conclusion}.

\begin{figure}[t] 
    \centering
    \includegraphics[width=8.0cm]{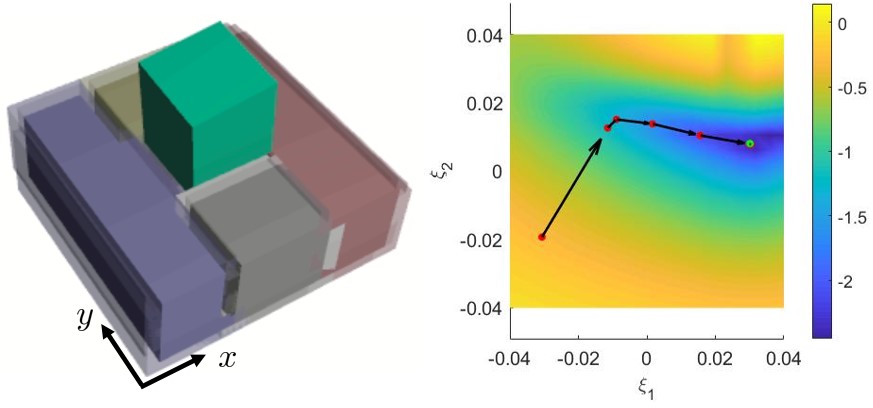}
    \caption{Graphical abstracts illustrating our differentiable pose estimation during contact. Left: A peg-in-hole task performed in a hole with pose uncertainty along the $x$ and $y$ directions. Right: Visualization of the differentiable cost landscape and the gradient-based optimization process utilizing force/torque sensor information acquired through interactions (Green dot: true uncertainty parameter).}
    \label{fig:thumbnail}
\end{figure}

\section{Related Works} \label{sec:relatedworks}

\subsection{Differentiable Contact Formulation}

Many existing studies \cite{geilinger2020add,heiden2021neuralsim,freeman2021brax,howell2022dojo} use collision proxies as simple shapes (point, sphere, plane, etc.). To our best knowledge, attempts to utilize more general geometry have begun to take place very recently.
First, the scope geometry is extended to convex primitives (e.g., cylinder, cone, padded polygon) in \cite{tracy2022differentiable} by utilizing implicit differentiation on conic optimization.
In \cite{turpin2022grasp} and \cite{higuera2022neural}, neural network-based implicit functions such as a signed distance field (SDF) or neural radiance field (NeRF \cite{mildenhall2021nerf}) are used.
However, accurate modeling of contact between the fields is not well-developed and often rely on query point sampling \cite{turpin2022grasp, le2023differentiable}. 
This can lead to reduced applicability and may generate an excessive number of contacts.
In \cite{montaut2022differentiable}, an approach using randomized smoothing with implicit differentiation of Gilbert-Johnson-Keerthi (GJK \cite{gilbert1988fast}) optimality condition is proposed.
However, the gradient may not be consistent with the underlying geometry and may still be myopic.
Instead, we propose to define the object shape through a prescribed support function which provides a direct parametric representation of convex geometry and allows for the exact computation of contact features.
Moreover, theoretical issues on degeneration is addressed, which have not been dealt in previous studies.

\subsection{Uncertainty Handling in Interaction}
Multiple studies have explored the identification of uncertainty in interaction, using a range of sensors.
From visual sensor measurements, 6D pose \cite{deng2020self, xiang2017posecnn} or inertial parameters \cite{mavrakis2020estimation} estimation can be utilized in online during tasks. 
However, vision sensors have limitations in that occlusion can occur, they cannot cover the entire robot body, and are difficult to achieve the high accuracy required for contact-intensive tasks such as peg-in-hole \cite{jin2021contact}.

Therefore, other sensors such as proprioceptive sensor, force/torque (FT) sensor, or more recently vision-based tactile sensors have also widely used.
In many works, encoding sensing measurements for use in manipulation heavily rely on learning-based frameworks.
For example, \cite{lee2019making} combines vision and FT sensor information using self-supervised learning.
In \cite{jin2021contact}, a certain action is performed to acquire FT measurements when contact occurs, and the plotted results are passed through neural network to estimate of the peg pose. 
For tactile sensor, the work in \cite{villalonga2021tactile} estimates the pose of grasped object using neural network and \cite{higuera2022neural} perform tracking of extrinsic contact between object and environment based on neural contact fields.
Similarly, \cite{suresh2022midastouch} performs global localization of the finger and object to a larger object and a long horizon.
These methods are data-dependent and may require re-learning as the use case expands. 
Our work can be combined with these approaches to better exploit the dynamic and kinematic structures, thus improving performance and generalizability.

There also exist some model-based methods to estimate certain information during contact.
In \cite{haddadin2017robot}, a comprehensive survey is provided, but how to deal with object geometry in tandem is rarely addressed.
Studies that address geometry and sensor information together rely primarily on sampling strategies.
For instance, contact particle filter (CPF) \cite{manuelli2016localizing, wang2020contact} presents the way for external contact localization using proprioceptive sensors or force sensors.
Object grasp pose estimation method is also conducted in \cite{sipos2022simultaneous} on the extension of CPF.
Similarly, \cite{von2021precise} presents the Bayseian framework for multi-modal fusion.
These have limitations in that handling multiple contacts is difficult or time-consuming and is inefficient owing to the limitation of the sampling-based method, which only utilizes forward-directed results.
Recently, \cite{ma2021extrinsic} and \cite{kim2021active} develop the optimization based extrinsic contact sensing frameworks using various structured constraints.
In comparison to above works, we aim for a differentiable formulation that can be applied to more general geometric types.

\section{Preliminary} \label{sec:preliminary}

\subsection{Support Function}
For a convex set $\mathcal{C}\subset\mathbb{R}^3$, the support function $h: \mathbb{R}^3\rightarrow \mathbb{R}$ is defined as
\begin{align} \label{eq:supportfunction}
&h(x) = \max_{s(x)\in\mathcal{C}} x^Ts(x)
\end{align}
where $x\in\mathbb{R}^3$ and $s(x)\in\mathcal{C}$ is the farthest point in the $x$ direction among the points in $\mathcal{C}$, called the support point.
Rather than calculating the support function for a given geometry, in this paper, we define the geometry of the object using a prescribed support function.

\subsection{Contact Features} \label{subsec:gdmodel}

\begin{figure}[t] 
    \centering
    \subfigure[Contact features]{
    \includegraphics[width=3.0cm]{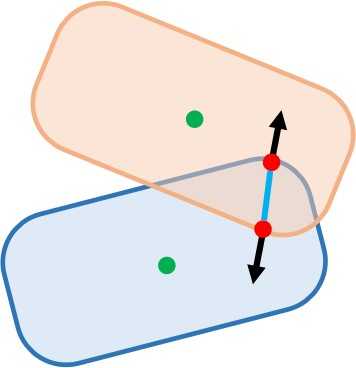}
    \label{fig-growthfeature}
    }
    \subfigure[MTD model]{
    \includegraphics[width=2.1cm]{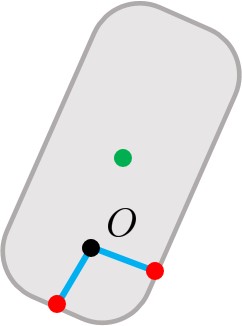}
    \label{fig-mtdmodel}
    }
    \subfigure[GD model]{
    \includegraphics[width=2.1cm]{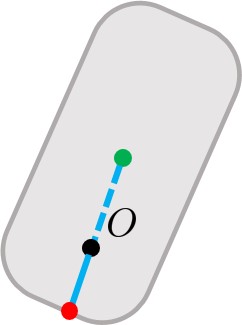}
    \label{fig-gdmodel}
    }
    \caption{Comparison of the minimum translation distance (MTD) model and growth distance (GD) model contact features. Minkowski sum is represented by the gray area. The penetration depth is indicated by the blue segment, the contact witness points by the red points, and the contact normal by the black arrows.}
    \label{fig-convergence}
\end{figure}

The contact features we refer to in this paper are penetration depth, witness points, and contact normal (see Fig.~\ref{fig-growthfeature}).
For general convex shapes, the minimum translation distance (MTD) model is widely adopted \cite{gilbert1988fast,pan2012fcl,schulman2014motion} to define contact features.
The model computes the closest point on the boundary of the Minkowski sum \cite{schneider2014convex} from the origin.
However as depicted in Fig.~\ref{fig-mtdmodel}, the closest point may have a non-unique solution.
The non-uniqueness occurs commonly under deep penetration and sharp geometry.
Although this issue is typically not so critical in simulation because it allows only a small amount of penetration, it is not so for us, as we are aiming for differentiable framework for general manipulation programming, for which such non-uniqueness can pose a serious issue. 

In contrast, the growth distance (GD) model, first proposed in \cite{ong1996growth}, computes the growth factor that two objects ``touch" each other, i.e.,
\begin{align} \label{eq-gdmodel}
\min_\sigma
\text{~s.t.~}\mathcal{C}_1(\sigma) \cap \mathcal{C}_2(\sigma) \neq \emptyset
\end{align}
where $\sigma\in\mathbb{R}^+$ is the growth factor and $\mathcal{C}(\sigma)$ is an increased convex set by the grwoth factor around a given center.
The model was intended to convert contact detection processes from polyhedral objects to linear programming, but we are more interested in the fact that it always guarantees uniqueness of solution \cite{zheng2010fast,tracy2022differentiable}.
This uniqueness can be easily identified using the property that the problem is equivalently substituted by the ray casting problem \cite{zheng2010fast,zheng2013ray} for Minkowski sum (see also Fig.~\ref{fig-gdmodel}).

\subsection{Implicit Function Theorem} \label{subsec:ift}

Consider the multi-variable equation:
\begin{align*}
F(x,y) = 0
\end{align*}
where $x\in\mathbb{R}^{n_x}$, $y\in\mathbb{R}^{n_y}$, and $F:\mathbb{R}^{n_x+n_y}\rightarrow\mathbb{R}^{n_y}$ is the continuously differentiable function.
Then the local solution mapping between $x$ and $y$ is unique and continuously differentiable satisfying
\begin{align} \label{eq:iftheorem}
\frac{dy}{dx} = - \left( \frac{\partial F}{\partial y} \right)^{-1}\frac{\partial F}{\partial x}
\end{align}
if the partial Jacobian $\frac{\partial F}{\partial y}$ is non-singular.
The implicit function theorem enables the use of a function between multiple variables based on an implicit relation.
\section{Problem Formulation} \label{sec:problem}

The main purpose of this paper is to develop the differentiable and general-purposed framework for uncertain pose estimation in interaction.
We define the basic structure of the problem as follows:

\begin{problem} [Uncertain Pose Estimation in Contact]
Given the measurement $\gamma \in \R^{n_\gamma}$, estimate uncertain pose parameter $\xi \in \R^{n_\xi}$ through following optimization problem:
\begin{align} \label{eq:problem_def}
\begin{split}
\min_{\xi,f \in \mathcal C}&~~\frac{1}{2} \norm{\gamma -  \sum_{k=1}^{m(\xi)} P_k(\xi) f_k}_{\Sigma^{-1}}^2  \\
\text{s.t.}&~~g_k(\xi) \ge 0,~~(g_k(\xi))^+ f_k = 0\quad \forall k
\end{split}
\end{align}
where $m$ is the number of collision, $g_k \in \R$, $f_k\in\mathbb{R}^3$, $P_k \in \R^{n_\gamma \times 3}$ are the gap, contact force, and contact mapping matrix (to the measurement) for the $k$-th contact. 
Note that $P_k$ can be expressed as a Jacobian matrix related to the contact witness points and normal.
Also, $\| \cdot \|_{\Sigma^{-1}}^2$ is the Mahalanobis distance defined under the covariance matrix $\Sigma$, $(\cdot)^+=\max(\cdot,0)$, and $\mathcal{C}$ denotes the friction cone set:
\begin{align}
\begin{split}
\mathcal{C} &= \mathcal{C}_1\times\cdots\times\mathcal{C}_{m} \\
\mathcal{C}_k &= \left\{ f_k~|~\mu_k f_{k,n} \ge \| f_{k,t}\| \right\} 
\end{split}
\end{align}
with $\mu,n,t$ being the friction coefficient\footnote{In practice, it is difficult to accurately know the friction coefficient value, so the rough upper value is mainly used.}, subscripts for the normal and tangential direction.
\end{problem}

Here, the measurement $\gamma$ is typically the FT or joint torque sensor value.
It can also be a stack of measurements rather than a single measurement.
Problem 1 can be interpreted as finding the most likely pose and contact force that minimizes the residual of the sensor measurements under several constraints, including the friction cone, non-penetration, and the complementarity constraint that ensures the contact force only acts when the gap is not bigger than zero.

Problem 1 can be seen as a generalization of the problem in \cite{manuelli2016localizing} to deal with the geometry of multiple objects, multiple contact interactions, and various types of uncertainty.  
Moreover, by including additional cost in \eqref{eq:problem_def}, it can be combined with other sensor information (e.g., vision) as well as dynamics condition (see also Sec.~\ref{subsec:additional}).
As a result, it has wide-ranging applications in robotics including grasp pose identification, object tracking, and external impact localization and is easily extensible.
However, there are several challenges to solving a problem: 1) the problem is nonlinear with multiple complementarity constraints, and 2) the differentiability of $m,g,P$ is ambiguous, making it difficult to find a proper gradient direction to optimize.

The following sections describe how to address this problem by making it differentiable. We begin by introducing a geometric representation that enables us to represent $g$ and $P$ in a differentiable manner.

\section{Differentiable Contact Features via Prescribed Support Function} \label{sec:diffcontact}

The computation of differentiable contact features in primitive shapes (e.g., sphere, plane) is simple, but its application is limited.
This section will describe a versatile and efficient scheme based on a prescribed support function for common convex geometry.
The method will later be extended to broader non-convex geometries by using a set of convex geometries.

\subsection{Prescribing Support Function} \label{subsec-geometry}

Following theorem motivates us to model the geometry using a prescribed support function.
\begin{theorem} [\cite{schneider2014convex}]
If $h: \mathbb{R}^3\rightarrow \mathbb{R}$ is a sublinear function that satisfies:
\begin{align*}
\begin{array}{rl}
\text{Positive homogeneity:}&h(\lambda x) = \lambda h(x)~\forall \lambda\ge 0,x\in\mathbb{R}^3 \vspace{1.5mm}\\
\text{Subadditivity:}&h(x+y) \le h(x) + h(y)~\forall x,y\in\mathbb{R}^3 
\end{array}
\end{align*}
then there is a unique convex body corresponding to the support function.
\end{theorem}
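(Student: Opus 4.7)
The plan is to construct the convex body directly as an intersection of half-spaces read off from $h$, verify that its support function recovers $h$, and then argue uniqueness by separation.

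Concretely, first I would define
\begin{align*}
K := \bigcap_{x \in \mathbb{R}^3} \bigl\{\, y \in \mathbb{R}^3 : x^T y \le h(x) \,\bigr\}.
\end{align*}
Closedness and convexity of $K$ are free from its presentation as an intersection of closed half-spaces. Boundedness follows because positive homogeneity together with subadditivity force $h$ to be continuous, and hence bounded on the unit sphere, so the defining inequalities with unit $x$ trap $K$ inside a ball. For nonemptiness I would invoke the Hahn--Banach theorem: sublinearity lets us extend the zero functional on $\{0\}$ to a linear functional on $\mathbb{R}^3$ dominated pointwise by $h$, whose Riesz representative is a point of $K$.

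Next I would verify that the support function $h_K$ of $K$ equals $h$. The inequality $h_K(x) \le h(x)$ is immediate from the definition. For the reverse, fix a direction $x_0$ and define a linear functional on $\mathrm{span}(x_0)$ by $t x_0 \mapsto t\,h(x_0)$; this is dominated by $h$, using positive homogeneity for $t \ge 0$ and the consequence $h(-x_0) \ge -h(x_0)$ of $h(0) = 0 \le h(x_0) + h(-x_0)$ for $t < 0$. Hahn--Banach then extends it to a linear functional $\ell$ on $\mathbb{R}^3$ still dominated by $h$, and the Riesz representative $y_0$ of $\ell$ lies in $K$ and attains $x_0^T y_0 = h(x_0)$, giving $h_K(x_0) \ge h(x_0)$.

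Uniqueness is a standard separation argument: any closed convex set equals the intersection of its closed supporting half-spaces, and those half-spaces are determined by the support function, so two convex bodies with a common support function must coincide. The main obstacle will be the reverse inequality $h_K \ge h$: the definition of $K$ only places it inside the half-spaces $\{x^T y \le h(x)\}$, and one must further show that $K$ actually touches each of the bounding hyperplanes. The Hahn--Banach step producing a functional that attains $h(x_0)$ while remaining dominated by $h$ everywhere is precisely where both sublinearity hypotheses are used together, which also explains why neither positive homogeneity nor subadditivity alone would suffice.
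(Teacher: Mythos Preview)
Your argument is correct and is essentially the classical proof of this fact (the construction of $K$ as the intersection of the half-spaces $\{y:x^Ty\le h(x)\}$, the Hahn--Banach step to exhibit a point of $K$ attaining $h(x_0)$ in every direction, and the separation-based uniqueness). There is no gap; the one place to be slightly more careful in a full write-up is the continuity claim used for boundedness, but that follows because sublinearity implies convexity and finite convex functions on $\mathbb{R}^3$ are continuous.

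However, note that the paper does not actually prove this theorem: it is stated with a citation to Schneider's monograph and used as a black box to justify that the prescribed function \eqref{eq-blendsupport} determines a well-defined convex body. So there is no ``paper's own proof'' to compare against; what you have written is precisely the standard textbook argument from the cited source, and it would serve perfectly well if one wanted to make the paper self-contained on this point.
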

This theorem implies the one-to-one relationship between a sublinear function and corresponding convex body.

\begin{figure}[t] 
    \centering
    \subfigure[Proposed]{
    \includegraphics[width=4.1cm]{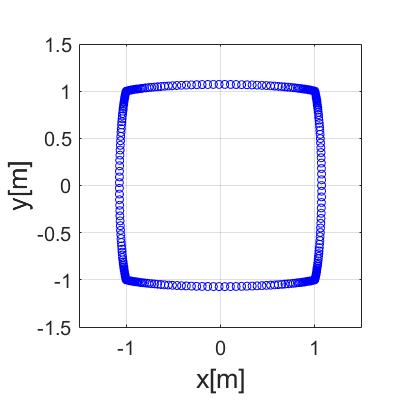}
    }
    \subfigure[Softmax]{
    \includegraphics[width=4.1cm]{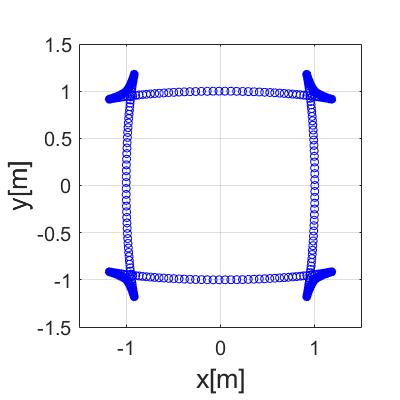}
    }
    \caption{Comparison of geometry obtained by the proposed support function and the naive softmax support function based on exponential. Vertex set is defined as $\left\{[1,1],[1,-1],[-1,1],[-1,-1]\right\}$.}
    \label{fig-maxcompare}
\end{figure}

The question remained is then how to define the prescribed form of the support function.
We first consider the set of vertices i.e., $v_1,\cdots,v_n\in\mathbb{R}^3$.
This vertex set can be determined by the user or obtained from data such as mesh or point cloud.
As it will be generalized under SE(3) transformation in Sec.~\ref{subsec-se3transformation}), here  we assume that the origin is inside the convex hull of the vertices.
Then we can easily find that the support function of the geometry defined as a convex hull is written as
\begin{align} \label{eq-vertexsuport}
&h(x) = \max \left(v_1^Tx,\cdots,v_n^Tx \right )
\end{align}
which is discontinuous.
Instead of the max operator, we use a smoothed version of \eqref{eq-vertexsuport} for differentiable contact feature computation.
The proposed function form is as follows:
\begin{align} \label{eq-blendsupport}
&h(x) = \left( \sum_{i=1}^n \left\{ \max(v_i^Tx,0) \right\}^p \right )^{\frac{1}{p}}
\end{align}
where $p>2$.
Equation \eqref{eq-blendsupport} is similar to the $p$-norm function, but the $\text{abs}(\cdot)$ is replaced by $\max(\cdot,0)$, which naturally culls negative elements.
Then Theorem 2 summarizes an important property of \eqref{eq-blendsupport}.
\begin{theorem}
Given vertex set $v_1,\cdots,v_n$, the function \eqref{eq-blendsupport} is sublinear and twice-differentiable on $\mathbb{R}^3\setminus{\boldsymbol{0}}$.
\end{theorem}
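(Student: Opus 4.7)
The plan is to verify sublinearity (positive homogeneity and subadditivity) by direct manipulation, and to establish twice-differentiability by reducing to the regularity of the scalar building block $\phi(t) := \{\max(t,0)\}^p$ together with an argument that the inner sum stays strictly positive away from the origin.

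For positive homogeneity, I would simply pull $\lambda \ge 0$ through $\max(v_i^T(\lambda x),0) = \lambda \max(v_i^T x,0)$, observe that the factor $\lambda^p$ comes out of the sum, and take the $p$-th root. For subadditivity, the key observation is that $t \mapsto \max(t,0)$ is itself subadditive, so $\max(v_i^T(x+y),0) \le \max(v_i^T x,0) + \max(v_i^T y,0)$ termwise. Then Minkowski's inequality for the ordinary $\ell^p$-norm (valid since $p>2>1$) gives the required $h(x+y)\le h(x)+h(y)$. This is the easy half.

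For twice-differentiability on $\mathbb{R}^3\setminus\{\boldsymbol 0\}$, the main step is to show that $\phi(t):=\{\max(t,0)\}^p$ is of class $C^2(\mathbb{R})$ when $p>2$. On $t>0$ one has $\phi'(t)=p\,t^{p-1}$ and $\phi''(t)=p(p-1)t^{p-2}$; on $t<0$ both derivatives vanish; at $t=0$ the one-sided limits of $\phi'$ and $\phi''$ both equal $0$ precisely because $p-1>1$ and $p-2>0$, which gives the $C^2$ matching. This is exactly where the hypothesis $p>2$ (as opposed to $p\ge 1$) is used, and it is the main subtlety of the statement. Since $x\mapsto v_i^T x$ is linear, the composition $x\mapsto \phi(v_i^T x)$ is $C^2$ on all of $\mathbb{R}^3$, and so is the finite sum $S(x):=\sum_{i=1}^n \phi(v_i^T x)$.

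Finally, $h=S^{1/p}$, and $u\mapsto u^{1/p}$ is $C^2$ on $\{u>0\}$, so it remains to show $S(x)>0$ for every $x\ne 0$. This is where the standing assumption of Sec.~V-A, that the origin lies in the interior of the convex hull of $\{v_i\}$, enters: otherwise some nonzero $x$ would satisfy $v_i^T x \le 0$ for all $i$, contradicting the interior property. Hence for every $x\ne 0$ there is at least one index $i$ with $v_i^T x>0$, forcing $S(x)>0$, and therefore $h$ is $C^2$ on $\mathbb{R}^3\setminus\{\boldsymbol 0\}$. I expect the only real obstacle to be the $C^2$-matching at $t=0$ for $\phi$; the rest are routine. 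Non-differentiability of $h$ at the origin itself is unavoidable and consistent with the conclusion, since positive homogeneity of degree one already prevents differentiability there unless $h$ is linear.
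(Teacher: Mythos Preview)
Your proposal is correct and follows essentially the same approach as the paper: both use Minkowski's inequality together with the subadditivity of $t\mapsto\max(t,0)$ for subadditivity, and both invoke the positivity of the inner sum (from the origin lying inside the convex hull) for twice-differentiability. You are actually more careful than the paper, which simply asserts that twice-differentiability ``can be easily verified''; your explicit $C^2$-matching argument for $\phi(t)=\{\max(t,0)\}^p$ at $t=0$ is precisely where the hypothesis $p>2$ enters and is a worthwhile addition.
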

\begin{proof}
Positive homegeneity is trivial. Subadditivity can be shown as
\begin{align*}
h(x)+h(y) 
&=  \left( \sum_{i=1}^n \left\{ (v_i^Tx)^+ \right\} ^p \right )^{\frac{1}{p}}
+ \left( \sum_{i=1}^n \left\{ (v_i^Ty)^+ \right\} ^p \right )^{\frac{1}{p}} \\
&\ge  \left( \sum_{i=1}^n \left\{ (v_i^Tx)^+ + (v_i^Ty)^+ \right\} ^p \right )^{\frac{1}{p}} \\
&\ge  \left( \sum_{i=1}^n \left\{ (v_i^T(x+y))^+ \right\} ^p \right )^{\frac{1}{p}} \\
&= h(x+y)
\end{align*}
using the Minkowski inequality, where $\max(\cdot,0)$ is simplified as $(\cdot)^+$.
Therefore, the function is sublinear.
Twice-differentiablity can be easily verified by using the fact that 
\begin{align*}
&\sum_{i=1}^n \left\{(v_i^Tx)^+ \right\} ^p > 0 
\end{align*}
for $x\in\mathbb{R}^3\setminus{\boldsymbol{0}}$ as the origin is inside the vertex set.
\end{proof}

The properties in Theorem 2 is crucial, as it ensures that any \eqref{eq-blendsupport} always corresponds to some convex geometry -  note from Fig.~\ref{fig-maxcompare} that other classes of support function are not necessarily able to do so.
Fig.~\ref{fig-blendgeometry} depicts various smoothed geometries generated by the support function \eqref{eq-blendsupport}.
We can find that smoothness of the geometry can be easily adjusted using $p$ while retaining convexity and differentiability.

\subsection{Support Point and $\mathrm{SE}(3)$ Transformation} \label{subsec-se3transformation}

\begin{figure}[t] 
    \centering
    \subfigure[Cube]{
    \includegraphics[width=8.0cm]{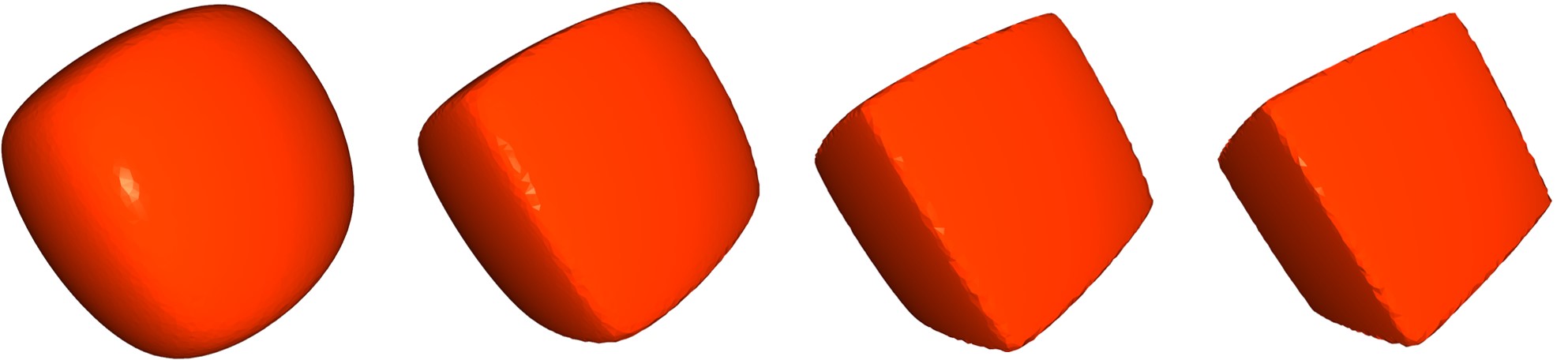}
    }
    \subfigure[Dodecahedron]{
    \includegraphics[width=8.0cm]{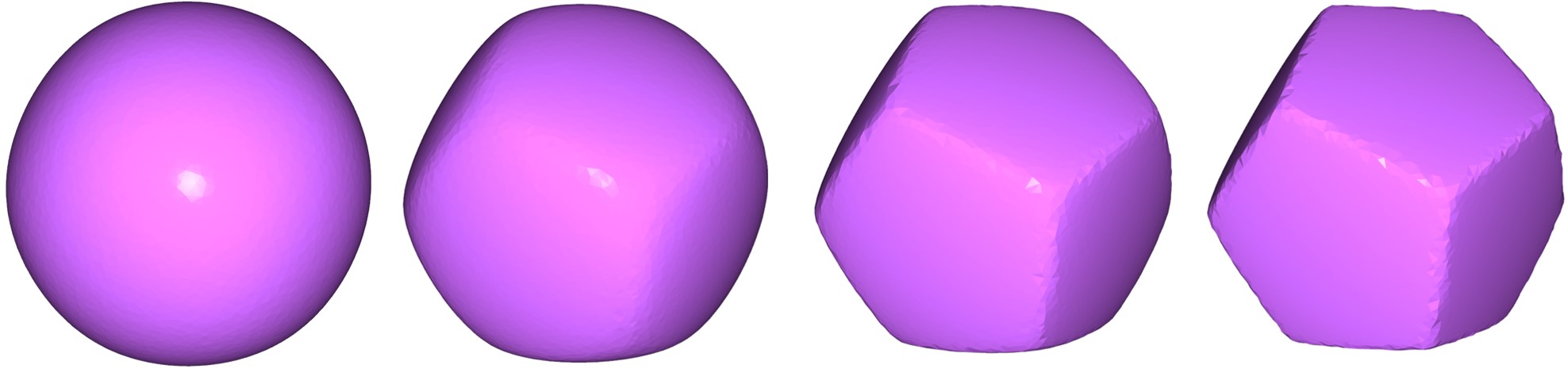}
    }
    \caption{Visualization of geometries represented by the prescribed support function \eqref{eq-blendsupport}. From left to right, $p=5,10,20,40$ are used.}
    \label{fig-blendgeometry}
\end{figure}

From the definition of support function \eqref{eq:supportfunction}, support point $s(x)$ can be derived as follows:
\begin{align} \label{eq-supportpoint}
&s(x) = s(x) + x^T \frac{ds}{dx} = \frac{dh}{dx}
\end{align}
since $x^T\frac{ds}{dx}=0$ holds from the homogeneity. 
Note that the support point can be easily obtained since $h(x)$ in \eqref{eq-blendsupport} is easy to differentiate.
By computing support points \eqref{eq-supportpoint} for various $x$ direction, we can visualize the corresponding shape of geometry.

The support function $h(x)$ and the point $s(x)$ are defined for the geometry that includes the origin. Such geometric representation can be generalized to arbitrary poses through $\mathrm{SE}(3)$ transformations.
Given $h$ and configuration vector $q\in\mathbb{R}^7$ (i.e., position and quaternion), the support function $\bar{h}$ and support point $\bar{s}$ for $q$ and $x$ can be derived as follows:
\begin{align} \label{eq:shbar}
\begin{split}
&\bar{h}(q,x) = h(R(q)^Tx) + p(q)^Tx \\
&\bar{s}(q,x) = R(q)s(R(q)^Tx) + p(q)
\end{split}
\end{align}
where $p(q)\in\mathbb{R}^3$ and $R(q)\in \mathrm{SO}(3)$ are the translation and rotation by $q$.
This transformation \eqref{eq:shbar} is essentially equivalent to converting $x$ to the object local coordinate to obtain $s(R(q)^Tx)$ and then converting it back to the global coordinate.
It can be easily verified that $\bar{f}$ also satisfies the property in Theorem 2, and further twice-differentiable for $q$.

\subsection{Contact Feature Computation}

\begin{figure}[t] 
    \centering
    \subfigure[Active contact]{
    \includegraphics[width=4.1cm]{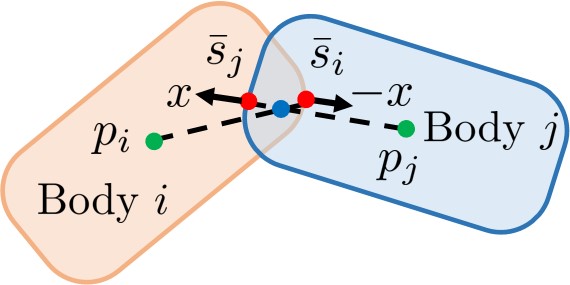}
    }
    \subfigure[Inactive contact]{
    \includegraphics[width=4.1cm]{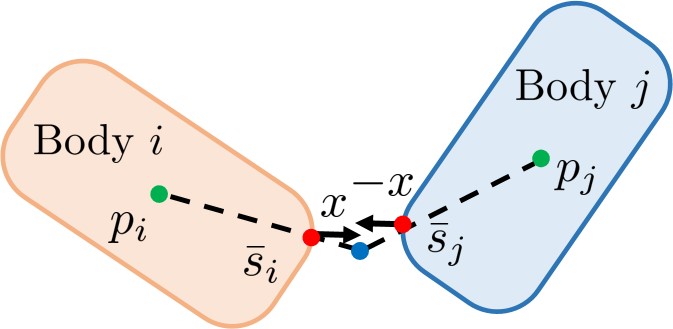}
    }
    \caption{Visualization of the condition in \eqref{eq-growtheq}. Support points (red points) on both bodies extended by the growth factor should meet exactly (blue point).}
    \label{fig-gdsolvervis}
\end{figure}

We compute the contact features based on the GD model described in Sec.~\ref{subsec:gdmodel}.
However as also mentioned in \cite{zheng2010fast}, the methods for functional surfaces rather than discrete geometries are quite limited.
In \cite{zheng2010fast}, using the equivalence of the GD model and ray shooting problem, a method based on the internal expanding procedure is presented.
In \cite{tracy2022differentiable}, optimization \eqref{eq-gdmodel} for convex primitives is formulated via conic optimization and solved using primal-dual interior-point method.
Here, combined with our geometry definition described above, we present an efficient and robust algorithm to solve GD model and its differentiation.
The key concept is to solve the GD model as an unconstrained nonlinear equation by exploiting the support function \eqref{eq-blendsupport}.

\subsubsection{Nonlinear equation}

Our unconstrained formulation employs the solution variables as $x$ (i.e., normal vector for support function input) and growth factor $\sigma\in\mathbb{R}$, resulting in $4$ dimensions.
Then the conditions that the solution must satisfy are: 1) the two support points of each body corresponding to $x$ coincide exactly when extended to $\sigma$; and 2) the normal vector $x$ has unit norm. 
Fig.~\ref{fig-gdsolvervis} visualizes the equivalence of these conditions and the growth distance model, both for active (penetrated) and inactive (separated) contact cases.
The conditions described above can be formulated by the following nonlinear equation: for given bodies $i$ and $j$:
\begin{align} \label{eq-growtheq}
F(x,\sigma,q) = \begin{bmatrix}\sigma (\bar{s}_i-\bar{s}_j) + (1-\sigma)(p_i-p_j) \\
\| x\|^2 - 1
\end{bmatrix}=0
\end{align}
where $\bar{s}_i=\bar{s}_i(x,q_i),\bar{s}_j=\bar{s}_j(-x,q_j)$ and $p=p(q)$. 
The contact detection process is then reduced to solve \eqref{eq-growtheq} with respect to $x,\sigma$ given the configuration $q_i$ and $q_j$.
Note that the formulation is of fixed dimension (i.e., $4$) regardless of the number of vertices used. 
See Appendix~\ref{subsec:appendixunique} for the statements on uniqueness of the solution.

\subsubsection{Newton solver}

\begin{algorithm}[t]
\caption {Contact Feature Solver} \label{alg-trustregion}
\begin{algorithmic}
\State Initialize $x,\sigma$ using IE procedure
\State Compute $F,J$ for initialized value by \eqref{eq-growtheq},\eqref{eq-growthjacobian}
\State Initialize trust region radius $\delta_{tr}$
\While{not converge}
\State Compute Newton step: $\Delta_{gn} = -J^{-1}F$
\State Compute Cauchy step: $\Delta_{ca} = -\beta J^TF$
\State Find dogleg step $\Delta_{dog}$ by $\Delta_{gn}$, $\Delta_{ca}$, and $\delta_{tr}$ \cite{rosen2012incremental}
\State Update $F,J$ under propagated point by $\Delta_{dog}$
\State Update $\delta_{tr}$ \cite{rosen2012incremental}
\State Update $x,\sigma$ using $\Delta_{dog}$ if the step accepted
\EndWhile
\State Compute differentiation by \eqref{eq:cdift} and \eqref{eq:witnessdiff}
\end{algorithmic}
\end{algorithm}

Theorem 2 ensures that $h$ is twice-differentiable everywhere.
Therefore we can always compute the Jacobian of $F$ in \eqref{eq-growtheq} as follows:
\begin{align} \label{eq-growthjacobian}
\begin{split}
J &= \left[\frac{\partial F}{\partial x},\frac{\partial F}{\partial \sigma}\right]=
\begin{bmatrix}
\sigma \left( \frac{d\bar{s}_i}{dx} + \frac{d\bar{s}_j}{dx}\right) & y\\
2x^T & 0
\end{bmatrix} \\
y &= R_is_i(R_i^Tx)-R_js_j(-R_j^Tx)
\end{split}
\end{align}
and \eqref{eq-growthjacobian} can be applied to Newton-type algorithm to solve nonlinear equation in \eqref{eq-growtheq}.
Specifically, we utilize the trust-region-dogleg method \cite{rosen2012incremental} to achieve stable convergence property.
Due to the simple structure of \eqref{eq-blendsupport}, $\frac{d\bar{s}}{dx}$ is also very easy to compute, much like $s$ (see Appendix~\ref{subsec:appendixa} for detailed derivation).
Consequently (and also due to its low-dimensionality), $J$ can be computed and solved in a highly efficient manner.

\subsubsection{Initialization}

Despite the fact that the trust-region-based method ensures the stability of algorithm, determining a good initial point is critical to practical performance.
With a good initialization, the Newton-based iteration is known to have quadratic convergence.
We find that the outcome of the first iteration of the internal expanding (IE) procedure presented in \cite{zheng2010fast} is useful as an initial point.
See Sec.~\ref{subsec:cdbenchmark} for detailed results.

\subsection{Feature Differentiation}

After obtaining the contact features, the differential values can be computed and used to obtain the gradients for $P$ and $g$ from \eqref{eq:problem_def}.
The conciseness of our GD model solver also makes the process of obtaining contact feature differentiation very efficient.
Applying implicit differentiation to the nonlinear equation \eqref{eq-growtheq}, we get
\begin{align} \label{eq:cdift}
&\frac{\partial F^*}{\partial q} + J^* \left[ \frac{dx^*}{dq};\frac{d\sigma^*}{dq} \right] = 0 
\end{align}
where the superscript $*$ denotes the value at the solution.
As $J^*$ is only a $4\times4$ matrix (and its factorization have already been computed in the solver step), we can obtain $\frac{dx^*}{dq}$ and $\frac{d\sigma^*}{dq}$ (i.e., differentiation of contact normal and growth factor) very efficiently.
Moreover, differentiation of witness points are simply computed as
\begin{align} \label{eq:witnessdiff}
&\frac{d\bar{s}_i^*}{dq} = \frac{\partial\bar{s}_i^*}{\partial q}+\frac{\partial\bar{s}_i^*}{\partial x}\frac{dx^*}{dq}, \quad
\frac{d\bar{s}_j^*}{dq} = \frac{\partial\bar{s}_j^*}{\partial q}+\frac{\partial\bar{s}_j^*}{\partial x}\frac{dx^*}{dq}
\end{align}
where $\frac{\partial\bar{s}_i^*}{\partial x}$ and $\frac{\partial\bar{s}_j^*}{\partial x}$ are already available from the solver.
Overall contact feature computation and differentiation procedure is summarized in Alg. 1.

\subsection{Analysis on Degeneration} \label{subsec:degeneration}
\begin{figure}[t] 
    \centering
    \subfigure[Proposed]{
    \includegraphics[width=4.1cm]{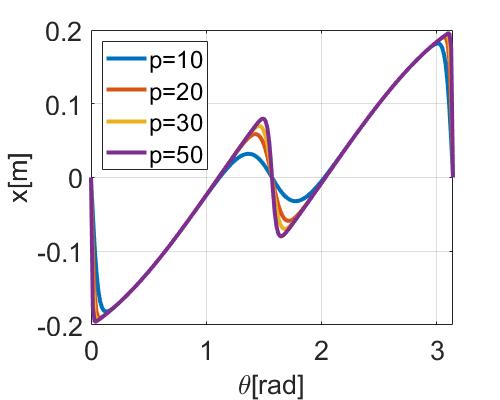}
    }
    \subfigure[Superquadrics]{
    \includegraphics[width=4.1cm]{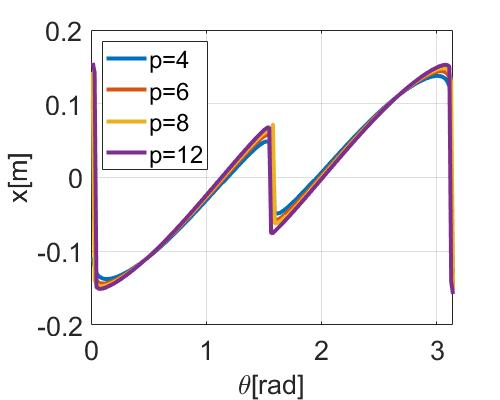}
    }
    \caption{Witness point change plots for two geometric modelings according to rotation angle.}
    \label{fig-degenerate}
\end{figure}
As we can see in \eqref{eq:cdift}, $J^*$ should be non-singular in order to avoid a degenerated situation.
Although the degeneration problem has not been well considered in previous studies, it must be addressed in order to ensure the smooth relation between variables using the implicit function theorem in Sec.~\ref{subsec:ift}.
Without this consideration, pathological cases can arise as demonstrated in \cite{bolte2021nonsmooth}.
In this paper, we theoretically analyze the condition to avoid degeneration for our proposed framework.
We start by making the following assumption.
\begin{assumption}
$\forall x\in\mathbb{R}^3\setminus \boldsymbol{0}$, there exists at least $3$ linearly independent vertices such that $v_i^Tx>0$.
\end{assumption}
This assumption is typically satisfied for shapes that require a sufficient number of vertices to define their geometry, but may not hold for very simple shapes such as a tetrahedron with four vertices. To satisfy the assumption in such cases, additional vertices can be added to the edges of the shape.
Based on this, we present the following lemma:
\begin{lemma}
$\frac{\partial \bar{s}}{\partial x}$ is a positive semi-definite matrix.
Moreover, its rank is $2$ under Assumption 1.
\end{lemma}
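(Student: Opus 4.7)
The plan is to reduce to the unrotated case and then read off both PSD-ness and rank from an explicit ``covariance'' form of the Hessian of $h$. First I would note that, from \eqref{eq:shbar}, $\partial \bar{s}/\partial x = R(q)\,(\partial s/\partial x)\,R(q)^T$ with $R(q) \in \mathrm{SO}(3)$, which preserves both PSD-ness and rank; combined with $s(x) = dh/dx$ from \eqref{eq-supportpoint}, it therefore suffices to analyze $\nabla^2 h(x)$ for $h$ as in \eqref{eq-blendsupport}.

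For positive semi-definiteness, the cleanest route I see is a direct computation. Setting $I^+ := \{i : v_i^T x > 0\}$, $a_i := v_i^T x$, $A := \sum_{i \in I^+} a_i^p$, $u_i := v_i / a_i$, and $w_i := a_i^p / A$ (so $w_i > 0$ on $I^+$ and $\sum_{i \in I^+} w_i = 1$), I expect differentiating $h = A^{1/p}$ twice and regrouping to yield
\begin{equation*}
\nabla^2 h(x) \;=\; (p-1)\, A^{1/p}\!\left( \sum_{i \in I^+} w_i\, u_i u_i^T \;-\; \bar u\, \bar u^T \right), \qquad \bar u \;:=\; \sum_{i \in I^+} w_i\, u_i,
\end{equation*}
which is $(p-1) A^{1/p} > 0$ times the covariance matrix of the discrete random vector taking value $u_i$ with probability $w_i$. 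Since covariance matrices are PSD, the first claim follows.

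For the rank I would combine matching upper and lower bounds. Positive homogeneity of $h$ makes $\nabla h$ degree-zero homogeneous, so differentiating $\nabla h(\lambda x) = \nabla h(x)$ in $\lambda$ at $\lambda = 1$ gives $\nabla^2 h(x)\, x = 0$; since $x \neq 0$, the rank is at most $2$. (Equivalently, $u_i^T x = 1$ for every $i \in I^+$, so $\{u_i\}$ lies in a single affine plane, capping the covariance rank at $2$.) For the lower bound, the rank of a covariance matrix equals the affine dimension of its support, so it suffices to exhibit three affinely independent $u_i$'s. Assumption 1 supplies indices $i_1,i_2,i_3 \in I^+$ with linearly independent $v_{i_1},v_{i_2},v_{i_3}$; if $\sum_k \alpha_k u_{i_k} = 0$ with $\sum_k \alpha_k = 0$, then $\sum_k (\alpha_k/a_{i_k})\, v_{i_k} = 0$, which by linear independence of the $v_{i_k}$ and positivity of $a_{i_k}$ forces $\alpha_k = 0$. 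Hence $u_{i_1},u_{i_2},u_{i_3}$ are affinely independent, the rank is at least $2$, and together with the upper bound it is exactly $2$.

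The main obstacle I anticipate is the bookkeeping that distills the Hessian of $A^{1/p}$ into the covariance form above; the chain-rule calculation through $h = A^{1/p}$ with $A = \sum_i ((v_i^T x)^+)^p$ is routine but easy to garble, and recognizing the resulting symmetric combination as $\mathbb{E}_w[u u^T] - \mathbb{E}_w[u]\mathbb{E}_w[u]^T$ is what makes the rank analysis collapse to the short affine-independence argument using Assumption 1.
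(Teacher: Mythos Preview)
Your proof is correct, and the underlying mathematics coincides with the paper's, just in a different packaging. The paper writes $\frac{ds}{dx} = (p-1)\,V A V^T$ explicitly and shows $u^T A u \ge 0$ via the Cauchy--Schwarz inequality $(\sum a_i^{p})(\sum a_i^{p-2}u_i^2) \ge (\sum a_i^{p-1}u_i)^2$, which is precisely your variance inequality $\mathbb{E}_w[UU^T]-\bar u\,\bar u^T \succeq 0$ after the substitution $u_i=v_i/a_i$, $w_i=a_i^{p}/A$. For the rank, the paper observes that $u=V^Tx$ attains equality in Cauchy--Schwarz (your homogeneity identity $\nabla^2 h(x)\,x=0$) and then argues that any other null direction $x'$ would force $V_{nz}^Tx'$ parallel to $V_{nz}^Tx$ on three linearly independent vertices from Assumption~1, hence $x'\parallel x$; this null-space computation is the dual of your affine-independence-of-support argument. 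Your covariance reformulation is arguably tidier, since the rank claim collapses to the standard fact that a covariance matrix has rank equal to the affine dimension of its support, whereas the paper's equality-case analysis is a bit more ad hoc.
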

\begin{proof}
See Appendix \ref{subsec:appendixb}.
\end{proof}
Based on the lemma, following theorem can be established:
\begin{theorem}
$J^*$ is non-singular under Assumption 1.
\end{theorem}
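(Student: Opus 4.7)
The plan is to argue by contradiction: suppose $(v,\tau)\in\mathbb{R}^3\times\mathbb{R}$ satisfies $J^*\begin{bmatrix}v\\ \tau\end{bmatrix}=0$, and show $(v,\tau)=0$. First I would analyze the top-left block $M:=\sigma^*\bigl(\partial\bar s_i/\partial x+\partial\bar s_j/\partial x\bigr)$ of $J^*$ using Lemma~1. Assuming $\sigma^*>0$ at a meaningful GD solution, $M$ is a sum of PSD matrices of rank $2$, hence PSD. The kernel of $M$ can be located via homogeneity: since $h$ is positive homogeneous of degree $1$, the support point $s=dh/dx$ is homogeneous of degree $0$, so $(\partial s/\partial x)\,x\equiv 0$; threading this identity through the $\mathrm{SE}(3)$ conjugation in \eqref{eq:shbar} and the chain-rule sign flip induced by the $-x$ argument of body $j$ gives $Mx^*=0$. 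Because each summand is PSD, $\ker M=\ker(\partial\bar s_i/\partial x)\cap\ker(\partial\bar s_j/\partial x)$, and each of these kernels is one-dimensional by Lemma~1; hence $\ker M=\mathrm{span}(x^*)$ exactly.

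Next I would read off the two block equations from $J^*(v,\tau)^T=0$: the bottom row yields $(x^*)^T v=0$, while the top row gives $\sigma^* M v+\tau y=0$. Left-multiplying the top row by $(x^*)^T$ and using symmetry of $M$ together with $Mx^*=0$ collapses everything to the scalar equation $\tau\,(x^*)^T y=0$. The remaining task is to show that $(x^*)^T y$ is strictly positive. Using the identity $z^T s(z)=h(z)$ from \eqref{eq:supportfunction} and the form of $y$ in \eqref{eq-growthjacobian}, I would compute
\begin{align*}
(x^*)^T y = h_i(R_i^T x^*) + h_j(-R_j^T x^*),
\end{align*}
and invoke Assumption~1 to argue that \eqref{eq-blendsupport} is strictly positive at any nonzero argument (there exist vertices with $v_k^T z>0$, so the inner sum in \eqref{eq-blendsupport} is strictly positive). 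Hence $\tau=0$. Substituting back, $Mv=0$ forces $v\in\mathrm{span}(x^*)$, which combined with $(x^*)^T v=0$ yields $v=0$.

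The main obstacle I anticipate is the careful execution of the first step: correctly tracking the chain-rule sign flip arising from $\bar s_j$'s $-x$ argument so that both blocks enter the Jacobian as \emph{positive} (not opposite-signed) PSD summands, and then sharpening $\ker M\supseteq\mathrm{span}(x^*)$ into exact equality by combining the rank-$2$ count from Lemma~1 with the kernel-intersection identity $\ker(A+B)=\ker A\cap\ker B$ valid for PSD $A,B$. Once these kernel facts are in place, the rest reduces to a short linear-algebra chase plus the strict positivity of $h$ provided by Assumption~1.
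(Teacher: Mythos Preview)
Your argument is correct and follows essentially the same skeleton as the paper's proof: assume a null vector $(v,\tau)$, left-multiply the top block equation by $(x^*)^T$ to annihilate the $M$-term (via homogeneity of $s$) and isolate $\tau\,(x^*)^T y$, use strict positivity of $(x^*)^T y$ to force $\tau=0$, then combine PSD plus rank-$2$ from Lemma~1 to conclude $v=0$. Your write-up is in fact more explicit than the paper's on two points the paper leaves implicit: (i) you spell out the homogeneity/chain-rule reasoning that gives $Mx^*=0$ and the sign bookkeeping for the $-x$ argument of body $j$, and (ii) you compute $(x^*)^T y=h_i(R_i^T x^*)+h_j(-R_j^T x^*)$ explicitly, whereas the paper simply asserts $x^T y>0$ ``from the definition of support point.'' One harmless slip: since you already absorbed $\sigma^*$ into $M$, the top-row equation should read $Mv+\tau y=0$, not $\sigma^* M v+\tau y=0$; this does not affect the argument because $\sigma^*>0$.
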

\begin{proof}
First, $x$ cannot be $0$ at the solution. Now suppose that $J^*$ is singular, therefore for a nonzero vector $z=[z_1,z_2]^T, z_1\in\mathbb{R}^3, z_2\in\mathbb{R}$, $J^*z=0$ holds i.e.,
\begin{align}
&\sigma \left( \frac{d\bar{s}_i}{dx} + \frac{d\bar{s}_j}{dx}\right)z_1+z_2y=0 \label{eq:thm3-1}\\
&x^Tz_1 = 0 \label{eq:thm3-2}
\end{align}
By multiplying $x^T$ to equation \eqref{eq:thm3-1}, we obtain:
\begin{align*}
x^T\left( \sigma \left( \frac{d\bar{s}_i}{dx} + \frac{d\bar{s}_j}{dx}\right)z_1+z_2y \right)=z_2 \left( x^Ty \right)=0
\end{align*}
holds. From the definition of support point, $x^Ty > 0$ holds, therefore we get $z_2=0$.
Now $z_1$ is supposed to be a non-zero vector and perpendicular to $x$ from \eqref{eq:thm3-2}.
Also from the positive semi-definite property in Lemma 1, we have $\frac{d\bar{s}_i}{dx}z_1=0$, which means the row space of $\frac{d\bar{s}_i}{dx}$ must be perpendicular to both $x$ and $z_1$.
Because $x$ and $z_1$ are perpendicular to each other, this contradicts the condition that the rank is $2$.
Therefore $z$ cannot be a non-zero vector, which means $J^*$ is non-singular.
\end{proof}

The theorem provides assurance that a degenerated situation can be avoided, given certain assumptions. This property is generally applicable as the assumptions do not impose significant limitations on its usage.
For demonstration, we conduct a simple experiment that plots the change in the witness point according to the rotation angle of the figure in 2D (see Appendix~\ref{subsec:2dtestsetting} for illustration and details).
As depicted in Fig.~\ref{fig-degenerate}, modeling using superquadrics always induces degeneration (i.e., non-smoothness) even though the parametric equation is smooth and strictly convex.
Our method, on the other hand, is always smooth and exhibits a stiffening pattern as $p$ increases.

\begin{remark}
Non-singular property of $J^*$ is also useful in terms of Newton-based solver (Alg. 1), as it guarantee that the limit point of the sequence satisfies $\|F\|=0$.
\end{remark}
\section{Bi-level Optimization Solver} \label{sec:uncertaintyest}
Combined with geometry modeling described above, in this section, we present the overall gradient-based solution scheme of the estimation problem \eqref{eq:problem_def}.

\subsection{Predefined Number of Contact}

\begin{figure}[t] 
    \centering
    \includegraphics[width=8.4cm]{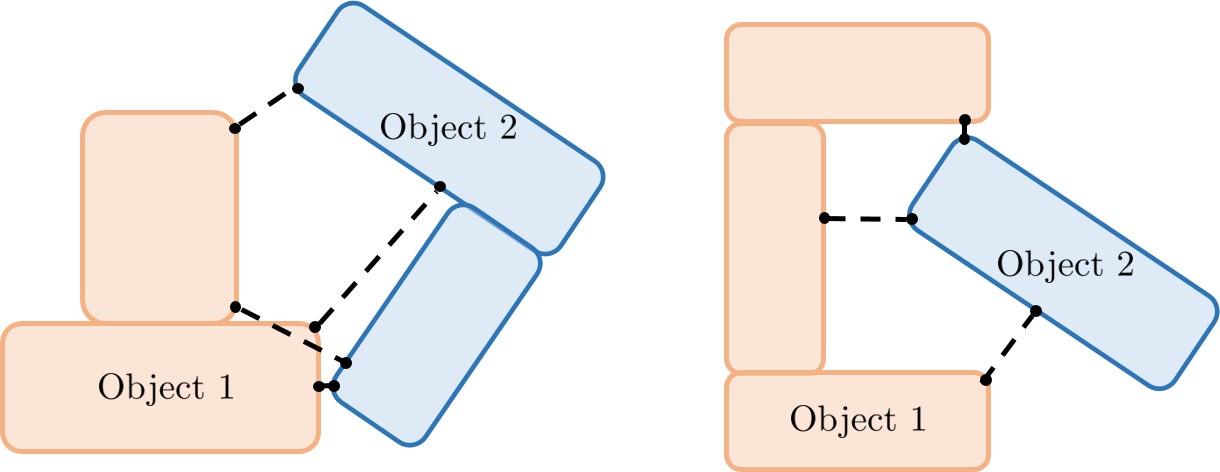}
    \caption{Example of convex decomposition for collisions between two objects. The number of collision $m=4$ for the left and $m=3$ for the right.}
    \label{fig:const_num_of_col}
\end{figure}

From differentiable contact feature suggested in Sec.~\ref{sec:diffcontact}, $P(\xi)$ and $g(\xi)$ are already differentiable. 
Despite this, differentiability of the overall problem is unclear because the number of contacts $m(\xi)$ can change discretely.
We address this issue by keeping the total number of collisions constant.
When two (possibly non-convex) interacting objects are present, we decompose them into $m_1$ and $m_2$ convex geometries, respectively.
Each convex geometry is represented by the method in Sec.~\ref{sec:diffcontact}, therefore only a single collision occurs between them constituting different objects.
Accordingly, we can predefine the collision number as constant, i.e., $m(\xi)=m=m_1 m_2$.
See Fig.~\ref{fig:const_num_of_col} for illustrative examples.
Note that defined contacts are not necessarily active.
We can suppress contact forces for inactive contact by imposing the constraint $(g_k(\xi))^+ f_k=0$.

\subsection{Differentiable Low-level Optimization}
\subsubsection{Smoothing and solving}
For the fixed $\xi$, problem \eqref{eq:problem_def} reduces to find the optimal contact force $f^*$ as
\begin{align*} 
\min_{f \in \mathcal C} \frac{1}{2} \norm{\gamma - P(\xi)f}_{\Sigma^{-1}}^2~\text{s.t.}~(g_k(\xi))^+f_k = 0
\end{align*}
which is a second-order cone programming (SOCP).
Here, the constraint $\left( g_k(\xi) \right)^+ f_k = 0$ is only a $\mathcal{C}^0$ function as it includes $\max$ operator.
For better smoothness, we replace it by a quadratic penalty term in the cost:
\begin{align} \label{eq:socp}
\min_{f \in \mathcal C} {1\over 2} \norm{\gamma - P(\xi)f}_{\Sigma^{-1}}^2 + {k_0 \over 2}\norm{D_{g}^+(\xi) f}^2
\end{align}
where $k_0$ is a penalty coefficient, $P=[P_1,\cdots,P_{m}]$, and $D_{g}^+=\text{blkdiag}(g_1^+I_3,\cdots,g_{m}^+I_3)$.
Then the cost is $\mathcal{C}^1$ function for $\xi$, as each $\left( g_k(\xi) \right)^+$ is squared.
The problem \eqref{eq:socp} is still SOCP and compare to quadratic programming (QP) \cite{amos2017optnet}, it can impose the friction cone without linearization, making it more preferable.
Optimality conditions of \eqref{eq:socp} can be written as
\begin{align} 
&H f + b = J_c^T \lambda \label{eq:pgs-optimality} \\
&0\le\lambda_k \perp c_k \ge 0 \quad \forall k \label{eq:pgs-complementarity} 
\end{align}
where $\perp$ denotes the complementarity, $H\in \R^{3 m_c \times 3 m_c}$, $b\in \R^{3m_c}$, and $c_k\in \R$ are defined as
\begin{align*}
H &= P^T \Sigma^{-1} P + k_0 \left(D_g^+\right)^2 \vspace{0.8mm}\\
b &= -P^T \Sigma^{-1} \gamma  \vspace{0.8mm}\\
c_k &= \mu f_{n,k} - \| f_{t,k} \|  \nonumber 
\end{align*}
where $(\xi)$ is omitted for simplicity, $\lambda =[\lambda_1,\cdots, \lambda_{m}]\in \R^{m}$ is the Lagrange multiplier, and $J_c \in \R^{m \times 3m}$ is the Jacobian $dc\over df$. 
We can see that $c_k$ is non-smooth at $f_k=0$, implying that singularity can occur.
Indeed, the solution of $f_k=0$ is often obtained, particularly in inactive contact.
To relax this issue, we propose to use the following smoothed $c_k$ instead:
\begin{align} \label{eq:smoothck}
c_k = \mu f_{n,k} - \sqrt{f_{t_1,k}^2+f_{t_2,k}^2+\epsilon}
\end{align}
where $\epsilon\in\mathbb{R}^+$ is the small positive value.
Our smoothing scheme has several advantages.
First, as the problem is still strictly convex, its solution set is always singleton.
Also, as \eqref{eq:smoothck} is still analytic, we can resolve the problem efficiently using projection based methods.
Specifically, we utilize the projected Gauss-Seidel (PGS) method \cite{jourdan1998gauss} which is widely used in physics simulation.
In practice, the problem is solved reliably and efficiently as PGS iteration converges to a solution in a small number of iterations.

\subsubsection{Differentiation}
To utilize the gradient method in the high-level optimization, the derivative of the solution of the low-level optimization with respect to the target parameter $\xi$ is required. 
Based on the differentiable contact features in Sec.~\ref{sec:diffcontact}, differentiating \eqref{eq:pgs-optimality} and \eqref{eq:pgs-complementarity} with respect to the parameter $\xi$ is possible, therefore
\begin{align} 
H {df^*\over d\xi} + {dH\over d\xi}f^* + {db\over d\xi} &= J_c^T{d\lambda\over d\xi} + D_\Lambda {df^*\over d\xi} \label{eq:diff_optimality} \\
{d\lambda_k \over d \xi}c_k + \lambda_k {d c_k \over d\xi} &= 0 \quad \forall k \label{eq:diff_complementarity}
\end{align}
can be obtained at the optimal solution $f^*$ of \eqref{eq:socp} where $D_\Lambda = \text{blkdiag}\left(\lambda_1{d^2c_1\over {df^*_1}^2},\cdots,\lambda_m{d^2c_m\over {df^*_m}^2}\right)$.
Here we can classify \eqref{eq:diff_complementarity} into two cases: $c_k=0$ and $c_k > 0$: 
\begin{align*}
\begin{array}{rl}
c_k=0:&\lambda_k {dc_k \over d\xi} = \lambda_k J_{c,k} {d f^*_k \over d \xi} = 0 \vspace{1mm} \\
c_k>0:&{d \lambda_k \over d\xi} = 0 
\end{array}
\end{align*}
This allows us to exclude the components of $\lambda$ that correspond to inactive constraints (i.e., $c_k > 0$) and reduce \eqref{eq:diff_optimality} and \eqref{eq:diff_complementarity} into following form:
\begin{align} \label{eq:lowlevelift}
\begin{bmatrix} H-D_\Lambda & -J_{c, r}^T \\
\Lambda_r J_{c,r} & 0 \end{bmatrix} 
\begin{bmatrix} df^* \over d\xi \\ d\lambda_r \over d \xi\end{bmatrix}
= \begin{bmatrix} -{dH \over d\xi} f^*-{d b\over d\xi} \\ 0 \end{bmatrix}
\end{align}
where $\lambda_r$ and $J_{c,r}$ are the reduced Lagrange multiplier and Jacobian, respectively, and $\Lambda_r$ is a diagonal matrix with the diagonal entries being the elements of $\lambda_r$.
In situations with a positive definite $H$ and no $\lambda_k$ that simultaneously satisfy $\lambda_k=0$ and $c_k=0$, the equation is solvable (See Appendix~\ref{subsec:invertibility} for more details). Otherwise, the least-squares solution can be employed instead \cite{agrawal2019differentiable}.

\subsection{High-level Optimization Solver}

\begin{algorithm}[t] 
\caption {Uncertain Pose Estimation in Contact} \label{alg-uncertainty}
\begin{algorithmic}
\State Initialize $\xi_1,\cdots,\xi_N$ by sampling
\For{$i=1$ to $N$}
\While{not converge}
\State Calculate $f_i^*$ with $\xi_i$ \eqref{eq:socp}
\State Calculate $df_i^* \over d\xi_i$ by solving \eqref{eq:lowlevelift}
\State Calculate the gradient of the cost function of \eqref{eq:high-level}
\State Update $\xi_i$ using Gauss-Newton algorithm
\EndWhile
\EndFor
\State Determine the best $\xi^*$ among $\xi^*_1,\cdots,\xi^*_N$
\end{algorithmic}
\end{algorithm}

By substituting the obtained low-level solution $f^*$ and handling the gap constraint $g_k(\xi) \ge 0$ as penalty functions, we can formulate the high-level problem as
\begin{align} \label{eq:high-level}
\min_\xi {1\over 2} \norm{\gamma - P(\xi) f^*}_{\Sigma^{-1}}^2 + {k_1 \over 2}  \sum_{k=1}^{m}\left( (-g_k(\xi))^+ \right)^2
\end{align}
where $k_1$ is the penalty coefficient to penalize penetration between objects.
As we can obtain the gradient of $f^*$, \eqref{eq:high-level} is now a non-linear least squares problem with differentiable error terms. Hence, we can use off-the-shelf algorithms such as the Gauss-Newton method to solve the problem, which also shows good convergence in practice.

Since the problem \eqref{eq:high-level} is non-convex, there can be multiple local minimum. 
To enhance the ability of our gradient-based algorithm to discover global minimum, we adopt a strategy of sampling the initial pose parameters and selecting the optimal value from among them after optimization.
The overall procedure of our differentiable uncertainty estimation is summarized in Alg.~\ref{alg-uncertainty}.

\subsection{Augmentation} \label{subsec:additional}

The nonlinear least squares problem \eqref{eq:high-level} can be extended by including various additional costs that reflect different aspects of the problem being solved. 
Some examples are as follows:

\subsubsection{Prior}
Prior knowledge of the uncertain pose parameters may be known in many cases.
The following simple Gaussian prior cost can be added in this case:
\begin{align*}
    \frac{1}{2}\| \xi - \xi_{p} \|_{\Sigma_{p}^{-1}}^2
\end{align*}
where $\xi_{p}$ is the prior of $\xi$ and $\Sigma_{p}$ is the covariance matrix.

\subsubsection{Bound constraint}
The bound constraint can be introduced to limit the range of uncertainty. 
In this case, penalty function can be utilized:
\begin{align*}
    \frac{1}{2}\| (-\xi + \xi_{l})^+ \|_{\Sigma_{l}^{-1}}^2 + \frac{1}{2}\| (\xi - \xi_{u})^+ \|_{\Sigma_{u}^{-1}}^2
\end{align*}
where $\xi_{l},\xi_{u}$ are the lower, upper bound of $\xi$ and $\Sigma_{l},\Sigma_{u}$ are the (typically low) covariance matrix.

\subsubsection{Motion model}
The pose parameters can sometimes be estimated over multiple time intervals. 
In such cases, a motion model can be introduced to better estimate the pose parameters. 
See Sec.~\ref{subsec:blindtrack} for an example.
\section{Results and Evaluations} \label{sec:resultandeval}
In this section, various simulation and experiment results are presented to validate the proposed framework.

\subsection{Collision Detection} \label{subsec:cdbenchmark}

We conduct benchmark tests to verify the usefulness of our geometric representations and contact feature computation methods.
We implement the baseline algorithm for GD model as state-of-the-art internal expanding (IE) algorithm \cite{zheng2010fast,zheng2013ray}\footnote{While combination of IE with convex cone projection \cite{zheng2009distance} was also proposed, we find that using IE alone is more suitable for our 3D cases.}, which is proven to be better than GJK based method.
We employ $3$ types of object from YCB dataset (Apple, Mustard, and Sponge, see Appendix~\ref{subsec:ycb} for the images).
Note that our support function based geometric modeling applies to both.
With the residual defined as \eqref{eq-growtheq}, the termination condition is set based on its norm reaching $10^{-10}$.
The performance is recorded under various max iteration number.

\begin{figure}[t] 
    \centering
    \subfigure{
    \includegraphics[width=4.1cm]{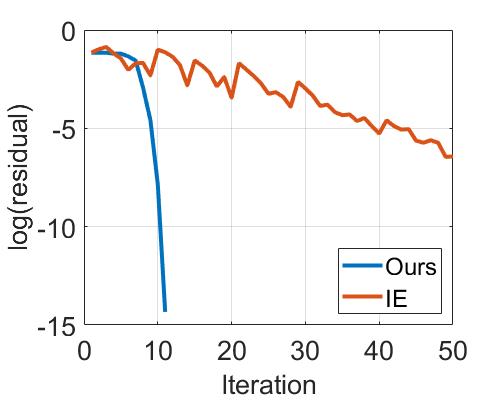}
    }
    \subfigure{
    \includegraphics[width=4.1cm]{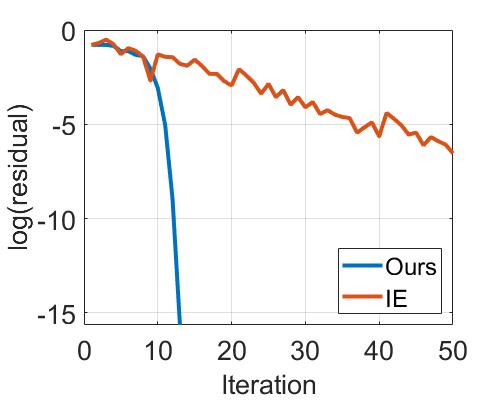}
    }
    \caption{Residual plots for IE and our method for 2 benchmark cases.}
    \label{fig:iecompare}
\end{figure}
\begin{table}[t]
\small
\centering
\renewcommand{\arraystretch}{1.5}{
\resizebox{8cm}{!}{
\begin{tabular}{|c|c|c|c|c|c|c|c|}
\hline
\multicolumn{2}{|c|}{Solver} & \multicolumn{3}{c|}{IE} & \multicolumn{3}{c|}{\textbf{Ours}} \\
\hline
\multicolumn{2}{|c|}{Max Iteration} &
30 & 60 & 90 & 
10 & 15 & 20   \\
\hline
\multirow{2}{*}{A-M} & AT $\downarrow$
& 50.30 & 85.75 & 134.8 
& 23.66 & 27.29 & 29.27 \\
\cline{2-8} & MLR $\uparrow$ 
& 3.905 & 5.587 & 6.271
& 5.262 & 8.109  & 9.954 \\
\hline
\multirow{2}{*}{M-S} & AT $\downarrow$ 
& 32.83 & 65.08 & 85.00
& 15.95 & 22.95 & 24.21  \\
\cline{2-8} & MLR $\uparrow$
& 4.813 & 5.785 & 6.629 
& 4.103 & 7.554 & 9.578     \\ 
\hline 
\multirow{2}{*}{S-A} & AT $\downarrow$ 
& 45.62 & 75.05 & 108.9
& 22.11 & 24.63 & 26.97 \\
\cline{2-8} & MLR $\uparrow$
& 4.962 & 5.645 & 5.998
& 5.099 & 8.043 & 9.729 \\
\hline 
\end{tabular}
}
}   
\caption{Evaluation results for two contact feature (with its differentiation) solvers. 
A, M, S are abbreviations for Apple, Mustard, and Sponge, respectively.
AT: average computation time ($\mu\rm{s}$), MLR: residual converted using $-\log(\cdot)$ before being averaged, therefore bigger is better).}
\label{table:iecompare}
\end{table}

Comparison results in Table~\ref{table:iecompare} demonstrate that our method outperforms the IE algorithm. It achieves faster and more accurate convergence, typically within 20 iterations. Fig.~\ref{fig:iecompare} illustrates the convergence behavior, with our method showing quadratic convergence after a few iterations, while the IE algorithm exhibits first-order convergence. This showcases the advantage of our Newton-type method utilizing the differential value of contact features.

\subsection{External Contact Localization}

\begin{figure}[t]
\centering
\subfigure{
\includegraphics[width=4.1cm]{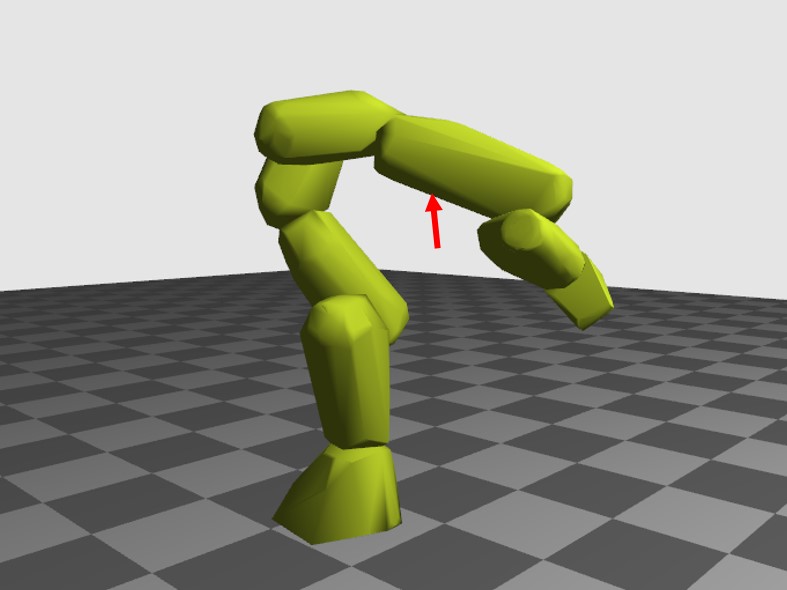}
}
\subfigure{
\includegraphics[width=4.1cm]{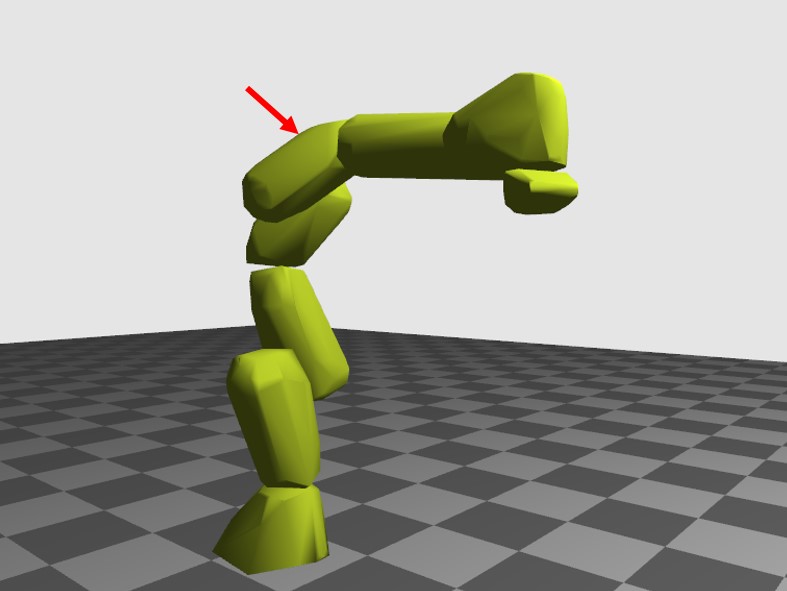}
}
\caption{7-DoF manipulator where each link consists of differentiable collision geometry}
\label{fig:franka}
\end{figure}

\begin{table}[t]
    \centering
    \renewcommand{\arraystretch}{1.4}
    \begin{tabular}{|c | c  | c| c | c| c|c|c|c|c|c|} 
    \hline
    \multicolumn{2}{|c|}{Noise}   & \multicolumn{3}{c|}{Low} & 
    \multicolumn{3}{c|}{High}\\ 
    \hline
    \multicolumn{2}{|c|}{Methods}   & PF & AGD & \textbf{Ours} & PF & AGD & \textbf{Ours}  \\ 
    \hline
    \multirow{3}{*}{Arm 5} 
    & AT $\downarrow$ & 8.94  & 7.86  & 4.10 & 9.00 & 8.03 & 3.95 \\
    & MLE $\uparrow$ & 1.94 &  2.43 & 3.29 & 1.84 & 1.91 & 2.05 \\
    & MLC $\uparrow$ & 2.21  & 3.47 & 5.02 & 1.65 & 2.05 & 2.21 \\
    \hline
    \multirow{3}{*}{Arm 6} 
    & AT $\downarrow$ & 9.46 & 7.64 & 3.95 & 9.38 & 8.04 & 3.93 \\
    & MLE $\uparrow$ & 2.13 & 3.17 & 3.67 & 1.99 & 2.12 & 2.19 \\
    & MLC $\uparrow$ & 1.99 & 3.90 & 5.03 & 1.75 & 2.16 & 2.39 \\
    \hline
    \multirow{3}{*}{Arm 7} 
    & AT $\downarrow$ & 13.1  & 12.0 & 5.27 & 13.2 & 11.7 & 5.39 \\   
    & MLE $\uparrow$ & 2.27  & 3.55 & 4.08 & 2.16 & 2.44 & 2.37 \\
    & MLC $\uparrow$ & 2.05  & 4.33 & 5.29 & 1.62 & 2.15 & 2.18 \\
     \hline 
    \end{tabular}
    \caption{Evaluation results for the external contact localization. AT: average computation time ($\rm{ms}$), MLE/MLC: position error ($\rm{m}$) and cost value converted using $-\log(\cdot)$ before being averaged, therefore bigger is better.}
     \label{tb:external_contact}
\end{table}

External contact localization problem \cite{manuelli2016localizing}, that determines where the contact occurred on the robot arm, is one of the basic examples of Problem 1.
In this case, the uncertain parameter $\xi\in\R^3$ is the collision point, and the measurement $\gamma\in \R^7$ is obtained from the joint torque sensor.
We assume that contact occurs at a single point on a given link\footnote{Here, the contact is point-geometry contact, while the preceding contents mainly describe geometry-geometry contact. However, the problem is still a subset of Problem 1.}, therefore $m=1$.
Existing contact localization algorithms rely heavily on sampling and retraction of points on the mesh, which is computationally expensive.
On these, we verify the efficacy of our differentiable framework here.

For the test cases, Franka Emika Panda \cite{franka} is used, while its links are represented by a convex hull of CAD data, as shown in Fig.~\ref{fig:franka}. 
Two baseline algorithms are employed for comparison in our study. The first algorithm is a particle filter (PF)-based method widely used in the literature \cite{koval2015pose,manuelli2016localizing}. In each iteration, every particle is updated based on the outcome of low-level problems and subsequently projected onto the mesh.
The second baseline algorithm is a more recent approach that utilizes an approximated gradient descent (AGD) combined with low-level problem differentiation \cite{pang2021identifying}. Here the gradient is approximated, as certain terms are disregarded. Additionally, a projection step to the mesh is still required since the derivative of contact features such as the gap and normal vector is unavailable.
Total 1000 trials are conducted and for each trial, a random force is applied to a random position on a link, and the accuracy and computation time are recorded.
For the contact particle filter method, we use 100 particles and iterated for 50 times for convergence. 
For other two methods (AGD and ours) an initial 10 randomly sampled points from the surface of geometry are used as initial guesses. 
Also, for both of the baseline methods, the low-level optimization is performed using the same PGS-based approach employed in our method.

The Table \ref{tb:external_contact} shows the result of external contact localization test, on various links under low/high sensor noise. 
As expected, the particle filter exhibits the lowest performance due to the necessity of conducting the lower-level optimization for each particle and relying on exploration through randomness.
Furthermore, the convergence behavior of AGD is inferior to ours because it is limited to first-order methods with an approximated gradient and necessitates projection. In contrast, our method leverages second-order Gauss-Newton optimization with the exact gradient, leading to improved convergence.

\begin{figure}[t]
\centering
    \centering
    \subfigure[Rectangular peg]{
    \includegraphics[width=8.4cm]{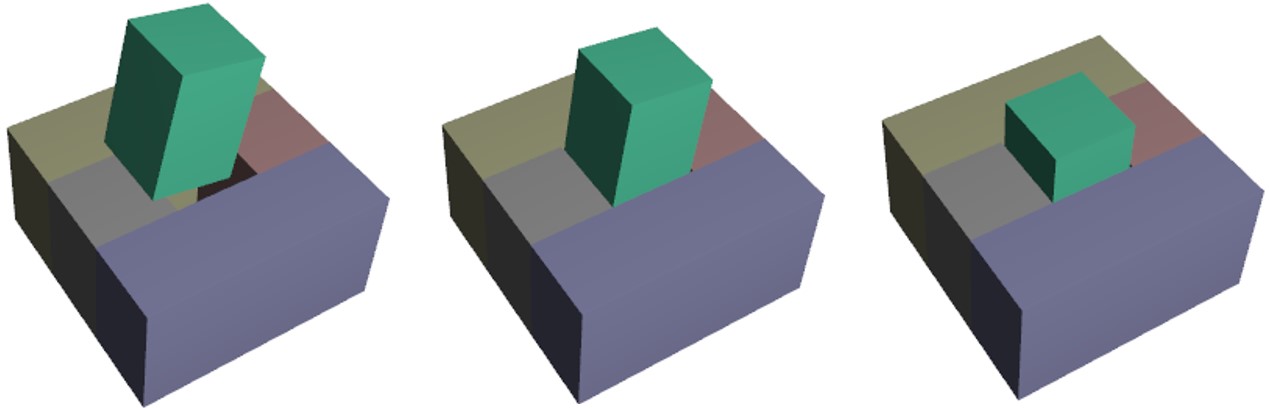}
    }
    \subfigure[Hexagonal peg]{
    \includegraphics[width=8.4CM]{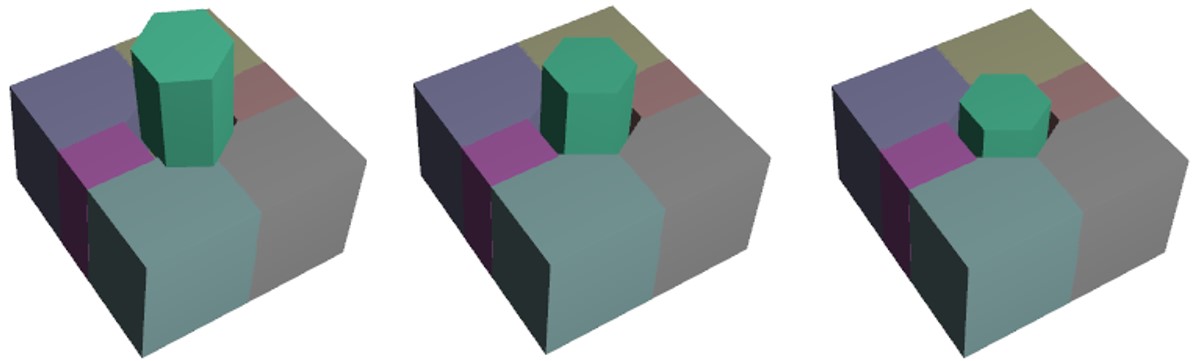}
    }
    \caption{Snapshots of simulation results of peg-in-hole manipulation using our uncertain pose estimation framework in online. Different colors are used to represent convex-decomposed shapes.}
    \label{fig:peginhole_online}
\end{figure}

\subsection{Peg-in-Hole} \label{subsec:peginhole}

Next, the proposed framework is tested on estimating the uncertain grasp pose (i.e., the pose of the peg with respect to the gripper) in peg-in-hole assembly task.
Here the uncertain parameter $\xi \in \R^3$ is the parameterized grasp pose (see Appendix~\ref{subsec:appendixpeg} for details) and the measurement $\gamma\in \R^6$ is from the force/torque sensor on gripper. 
We assume that the gripper and hole poses are known.

The experiment employs two distinct peg geometries: a rectangular prism and a hexagonal prism. 
The rectangular prism has eight vertices, while the hexagonal prism has twelve vertices. 
As shown in Fig.~\ref{fig:peginhole_online}, the hole is decomposed into a total of 4 and 6 convex geometries, and the predefined numbers of collisions $m$ are 4 and 6, respectively.

For the evaluation, we first collect simulation data (FT measurement, ground-truth grasp pose) in a contact situation using the original geometry. 
Here, the data accumulated over three contacts (i.e., $\gamma \in \R^{18}$) is used.
The identification is then performed using the proposed differentiable contact feature, with three initial samples.
For the baseline, we implement the particle filter (PF)-based method similar to \cite{sipos2022simultaneous}.
The PF solves the high-level problem by using the grasp pose as particles with sampling strategy. 
For the low-level problem for each particle, we take the same methodology of our framework for better performance.
Also, the number of particles is 25 (PF25) and 50 (PF50).

\begin{table}[t]
    \centering
    \renewcommand{\arraystretch}{1.2}
    \begin{tabular}{|c|c|c|c|c|c|c|c|} 
     \hline
    \multicolumn{2}{|c|}{Noise} &  \multicolumn{3}{c|}{Low} & \multicolumn{3}{c|}{High} \\ \hline
    \multicolumn{2}{|c|}{Methods} &  \textbf{Ours} & PF25 & PF50 & \textbf{Ours} & PF25 & PF50\\ 
     \hline
       \multirow{4}{*}{Rect} 
      & AT $\downarrow$ & 9.22 & 44.1 & 89.1 & 10.7 & 43.7 & 89.3\\
      & MLPE $\uparrow$ & 4.54 & 1.91 & 2.08 & 3.34 & 1.99 & 2.22\\
      & MLRE $\uparrow$ & 3.72 & 1.06 & 0.94 & 2.47 & 0.83 & 1.16\\
      & MLC $\uparrow$ & 5.74 & -0.912 & -0.66& 2.03 & -0.70 & -0.19\\
     \hline      
     \multirow{4}{*}{Hexa} 
     & AT $\downarrow$ & 18.6 & 100 & 197  & 16.6 & 99.4 & 192\\
     & MLPE $\uparrow$ & 3.71 & 2.21 & 2.37 & 3.87 & 2.47 & 2.34\\
     & MLRE $\uparrow$ & 2.58 & 0.87 & 1.10 & 2.50 & 1.06 & 1.07\\
     & MLC $\uparrow$ & 3.28 & -0.59 & 0.36 & 1.66 & 0.46 & 0.19\\
      \hline
    \end{tabular}
    \caption{Evaluation results for the peg-in-hole assembly task. AT: average computation time ($\rm{ms}$). MLPE/MLRE/MLC: position error ($\rm{m}$), rotation error ($\rm{rad}$) and cost value converted using $-\log(\cdot)$ before being averaged, therefore bigger is better.}
     \label{tb:penginhole_compare}
\end{table}

\begin{table}[t]
    \centering
    \renewcommand{\arraystretch}{1.2}
    \begin{tabular}{|c|c|c|c|c |c|c|c|} 
     \hline
     \multicolumn{2}{|c|}{Noise}& \multicolumn{3}{c|}{Low} & \multicolumn{3}{c|}{High}\\ \hline 
     \multicolumn{2}{|c|}{$p$} & 50 & 60 & 70 & 50 & 60 & 70 \\ 
     \hline
     \multirow{4}{*}{Rect}
      & AT $\downarrow$ & 8.98 & 9.10 & 9.22 &  9.71 & 9.31 & 10.7 \\
      & MLPE $\uparrow$ & 2.92 & 3.27 & 4.54 & 2.75 & 3.02 & 3.34 \\
      & MLRE $\uparrow$ & 2.11 & 2.48 & 3.72 & 1.91 & 2.29 & 2.47 \\
      & MLC $\uparrow$ & 3.43 & 4.32 & 5.74 & 2.03 & 1.95 & 2.03 \\
     \hline
     \multirow{4}{*}{Hexa} 
     & AT $\downarrow$ & 15.5 & 18.2 & 18.6 & 16.5 & 16.1 & 16.6 \\
     & MLPE $\uparrow$ & 2.94 & 3.19 & 3.71 & 3.00 & 3.15 & 3.87 \\
     & MLRE $\uparrow$ & 1.97 & 2.13 & 2.58 & 2.06 & 2.03 & 2.50 \\
     & MLC $\uparrow$ & 1.75 & 2.11 & 3.28 & 1.32 & 1.24 & 1.66\\
     \hline
    \end{tabular}
    \caption{Evaluation results under various smoothing parameters. AT: average computation time ($\rm{ms}$). MLPE/MLRE/MLC: position error ($\rm{m}$), rotation error ($\rm{rad}$) and cost value converted using $-\log(\cdot)$ before being averaged, therefore bigger is better.}
     \label{tb:penginhole_result}
\end{table}

The comparison results are summarized in Table~\ref{tb:penginhole_compare}. A total of ten datasets and two different amounts of noise (standard deviations of $0.1$ and $0.001$) are used. The results clearly demonstrate that the proposed method outperforms the particle filter-based method in terms of accuracy and efficiency. This highlights how the Gauss-Newton algorithm, utilizing gradients, enables rapid convergence to a solution with non-penetration and proper normal/witness points.

Furthermore, a comparative study is conducted by varying the smoothing parameters within our framework. Specifically, the smoothing parameter $p$ is varied from $50$ to $70$. The results are presented in Table~\ref{tb:penginhole_result}. It can be observed that lower values of $p$ result in slightly shorter average computation times. Conversely, higher values of $p$ yield more accurate results as they are closer to the original geometry. Consequently, future investigations could focus on finding fast approximated solutions through proper $p$-smoothing and refining them towards the exact geometry under higher values of $p$.

Additionally, the estimation process can be performed online, involving repeated trials and data augmentation until the task is completed. Simulation snapshots of the peg-in-hole assembly with online estimation are depicted in Fig.~\ref{fig:peginhole_online}. For the video, please refer to our supplementary material. Further visualization results can be found in the Appendix~\ref{subsec:appendixpeg}.

\subsection{Augmentation: Blind Object Tracking} \label{subsec:blindtrack}

\begin{figure}[t] 
    \centering
    \subfigure[Without motion model]{
    \includegraphics[width=4.1cm]{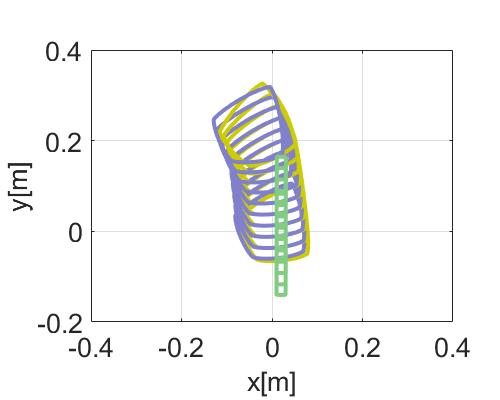}
    }
    \subfigure[With motion model]{
    \includegraphics[width=4.1cm]{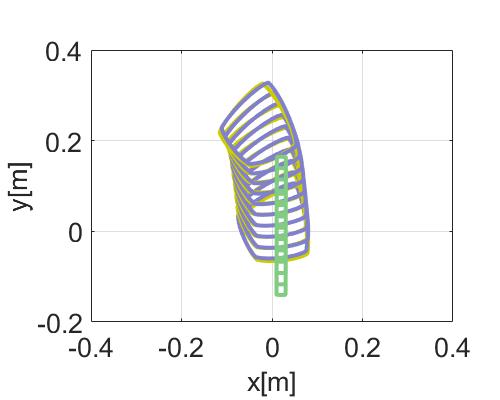}
    }
    \caption{Comparison result of the blind object tracking performance with/without motion model. Green: End effector. Yellow: ground-truth. Purple: Estimation result.}
    \label{fig:pushcompare}
\end{figure}

This subsection provides an example of augmenting our framework with other models, as explained in Section~\ref{subsec:additional}. Specifically, we focus on blind object tracking, which involves tracking an object without relying on visual information during the task. This capability proves beneficial in cluttered environments or areas with limited lighting.
To demonstrate blind object tracking, we configure a pushing environment where the interacting objects are represented by convex geometries based on four vertices. It is worth noting that previous studies \cite{yu2015shape, suresh2021tactile} have tackled similar tasks; however, many of them simplified the shape of the tip to a point. In contrast, our framework allows for a more versatile geometric representation, enabling its applicability to a broader range of end effectors and object shapes. However, still diverse real-world scenarios are remained for future research.

The uncertain parameter $\xi$ and the FT measurement $\gamma$ are stack of values for multiple time intervals.
Here, we adopt the quasi-static motion model based on limit surface \cite{lynch1992manipulation,suresh2021tactile} for augmented cost.
Note that all components in the model is a function of $\xi$ and $f^*$ therefore can be efficiently differentiated.
Ground truth data is obtained from the simulation environment and compared to the estimated results.
The results are illustrated in Fig.~\ref{fig:pushcompare}.
The result from our vanilla cost formulation \eqref{eq:high-level} without motion model exhibits a noticeable bias error. Conversely, when incorporating the motion model, the results demonstrate a substantial improvement in accuracy (reducing the RMSE by $30\%$).

\subsection{Real World Experiment: Dish Placing}

\begin{figure}[t]
\centering
\subfigure{
\includegraphics[width=8.5cm]{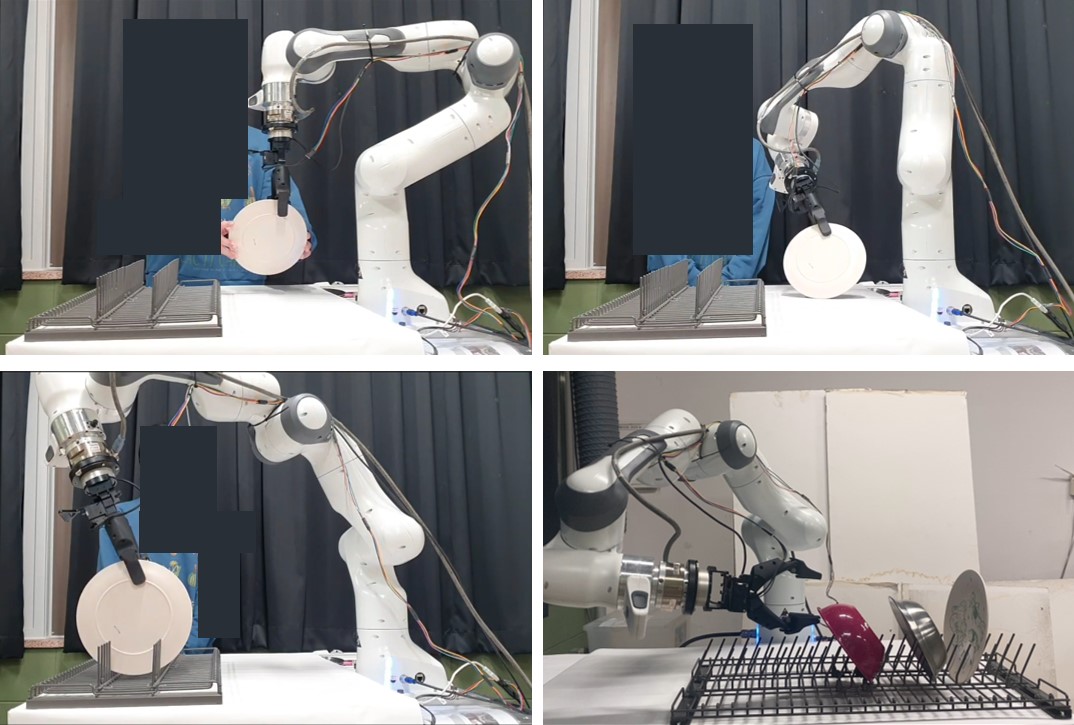}}
\caption{Experimental demonstration of our framework in dish placing task. Top left: A human gives an arbitrary grasp pose. Top right: The robot estimates the uncertainty through interaction with the ground. Bottom left: Placing succeeded by proper estimation. Bottom right: Three dishes are successfully placed in a row.}
\label{fig:dishplacing}
\end{figure}

We deploy our framework in a dish placement task for experimental validation in the real world.
The manipulator is built with Franka Emika Panda and a parallel gripper, and ATI Gamma is utilized as the FT sensor.
Three different dishes are used, with a narrow-spaced dish rack.
Test is conducted as follow: a human makes the gripper to grasp the dish in an arbitrary pose, and the robot identifies the uncertain grasp pose through interaction with the ground.
In the identification process, Alg.~\ref{alg-uncertainty} is employed while the uncertain grasp parameter is modeled in $3$-dimension and the dishes are represented by a smoothed convex hull with a prescribed support function.
Following the identification, the placing is carried out by following the pre-planned trajectory.
If the grasp pose is not estimated correctly, the placement will fail with a stuck or jamming.
Our framework is successfully applied to enable successful performance of dish placement tasks - see Fig.~\ref{fig:dishplacing} for experiment snapshots.
See also our supplementary materials for video and more details.

\section{Discussions and Conclusion} \label{sec:conclusion}

In this paper, we propose a novel uncertain pose estimation framework for interactive robot tasks.
Essentially, we frame the problem as bi-level optimization and devise a way to solve it based on gradient.
Prescribed support function based geometry definition is first presented to make it possible to express differentiable contact features.
The definition also comes with an effective solver algorithm and has useful theoretical properties.
Then by using the predefined number of contacts and differentiating low-level problems, the original problem is finally transformed into a non-linear least squares problem, which can be solved efficiently using conventional gradient-based methods.
Several scenarios are implemented and demonstrate how well our method can outperform currently used sampling-based approaches.

There exists several possible directions for future works.
First, our method is mainly to utilize FT or joint torque sensor information, so combination with more diverse sensors will be useful.
Specifically, embedding the differentiable nonlinear least square derived in our work to general factor graph optimization form will be an important task.
It would also be meaningful to develop a way to handle situations that uncertainty exists in geometry parameters as well as poses.
In a similar vein, specific methodologies for extracting prescribed support function from visual information will be an important topic.
Finally, since our method essentially consists of model-based optimization, it will be interesting to combine it with learning-based methods by modeling it as a single layer \cite{amos2017optnet}.

\section*{Acknowledgement}
This research was supported by Samsung Research and the RS-2022-00144468 of the National Research Foundation (NRF) funded by the Ministry of Science and ICT (MSIT) of Korea. 

\bibliographystyle{plainnat}
\bibliography{reference}

\clearpage

\appendix

\subsection{Additional visualizations}

Fig.~\ref{fig:advis} shows additional visualizations of equations presented in the main contents.

\subsection{On the uniqueness of the solution of \eqref{eq-growtheq}} \label{subsec:appendixunique} 

There are two possible solutions for \eqref{eq-growtheq}, one with a positive $\sigma$ and one with a negative $\sigma$. 
In order to ensure appropriate collision detection, the constraint $\sigma > 0$ is necessary. 
To impose $\sigma>0$, a few simple but additional steps are required in the Newton step, but we find that those are not really necessary under proper initialization of $\sigma$ (in our cases, via IE process).
Thus, we take an unconstrained approach to the problem.

\subsection{Derivation of $\frac{d\bar{s}}{dx}$} \label{subsec:appendixa}

We will derive $\frac{ds}{dx}$, as $\frac{d\bar{s}}{dx}$ is straighforwardly obtained from it.
To simplify notation, let us define $\hat{a}_k=[a_1^k,\cdots,a_n^k]^T$ with $a_i=(v_i^Tx)^+$ and $\tilde{a}_p=\sum\hat{a}_p$.
Then we have
\begin{align*} 
%&s(x) = \left( \sum_{i=1}^n a_i^p \right )^{\frac{1}{p}-1}\hat{a}_{p-1} \\
&s(x) = V(\tilde{a}_p)^{\frac{1}{p}-1}\hat{a}_{p-1} \\
&\frac{ds}{dx} = (p-1)V\underbrace{\left( (\tilde{a}_p)^{\frac{1}{p}-1}\text{diag}\left(\hat{a}_{p-2}\right) - (\tilde{a}_p)^{\frac{1}{p}-2}\hat{a}_{p-1}\hat{a}_{p-1}^T \right)}_{A} V^T 
\end{align*}
where $\text{diag}(\cdot)$ denotes the diagonal matrix from a given vector.
Note that the actual computation flow computes the $3\times3$ matrix after computing the $3\times1$ vector $V\hat{a}_{p-1}$, so the complexity is $\mathcal{O}(n)$.

\subsection{Proof of Lemma 1} \label{subsec:appendixb}

Let us first prove the positive semi-definite property.
It is sufficient to show the positive semi-definite property of $A$.
Consider a $n$-dimensional vector $u=[u_1,\cdots,u_n]^T$.
Then
\begin{align*}
u^TAu = \tilde{a}_pu^T\text{diag}\left(\hat{a}_{p-2}\right)u - (u^T\hat{a}_{p-1})^2
\end{align*}
holds. As Cauchy-Schwarz inequality indicates
\begin{align*}
&(a_1^p+\cdots+a_n^p)(a_1^{p-2}u_1^2+\cdots+a_n^{p-2}u_n^2) \\
&\ge (a_1^{p-1}u_1+\cdots+a_n^{p-1}u_n)^2
\end{align*}
it can be confirmed that $u^TAu\ge0$, which means $A$ is positive semi-definite.
Now let us show the rank property.
It is well known that $Au=0$ holds if and only if $u^TAu=0$, if $A$ is a positive semi-definite matrix.
Then as $u=V^Tx$ holds the equality condition of Cauchy-Schwarz inequality, rank of $\frac{ds}{dx}$ is lower than $2$.
%Therefore, it is sufficient to show that all $u$ such that $u^TAu=0$ are parallel.
Now suppose that there exists $x'$ such that $u'=V^Tx'$ meets the equality condition.
From the assumption, at least three components of $\hat{a}_1$ are non-zero. 
Without loss of generality, let us consider $a_1,a_2,a_3$ are non-zero.
Then $V_{nz}^Tx$ and $V_{nz}^Tx'$ must be parallel, with $V_{nz}=[v_1,v_2,v_3]$.
Finally, as $V_{nz}$ is full rank from the assumption, $x'$ is parallel to $x$, and we conclude that the rank of $\frac{ds}{dx}$ is always $2$.

\subsection{Details on Degeneration Test in Sec.~\ref{subsec:degeneration}} \label{subsec:2dtestsetting}
Illustrations for the degeneration test conducted in Sec.~\ref{subsec:degeneration} is visualized in Fig.~\ref{fig:2dtest}.
Each rectangle shape is represented by $4$ vertices in our geometry model.
Superquadric model can be written as following equation:
\begin{align*}
    \left(\frac{x}{\alpha_1}\right)^p+\left(\frac{y}{\alpha_2}\right)^p=1
\end{align*}
where $p\in\mathbb{R}^+$ is the smoothing parameter similarly to in \eqref{eq-blendsupport}, and $\alpha_1,\alpha_2\in\mathbb{R}$ are the size parameters.  

\subsection{Invertibility of \eqref{eq:lowlevelift}} \label{subsec:invertibility}
Jacobian and Hessian of $c_k$ can be written as
\begin{align*}
{dc_k^*\over df_k} &= \begin{bmatrix}
-{2f_{t_1}\over(f_{t_1}^2+f_{t_2}^2+\epsilon)^{\frac{1}{2}}}&-{2f_{t_2}\over(f_{t_1}^2+f_{t_2}^2+\epsilon)^{\frac{1}{2}}} & \mu 
\end{bmatrix}\\
{d^2c_k^*\over df_k^2} &= 
-{2\over(f_{t_1}^2+f_{t_2}^2+\epsilon)^{\frac{3}{2}}}\begin{bmatrix}
f_{t_2}^2 & f_{t_1}f_{t_2} & 0\\
 f_{t_1}f_{t_2} & f_{t_1}^2 & 0\\
 0& 0 & 0
\end{bmatrix}
\end{align*}
We can find that Jacobian is always rank $1$, as $f_k$ cannot be $0$ to satisfy $c_k\ge0$.
Also, Hessian is always negative semi-definite.
We also know that $\lambda_k\ge 0$, so $D_\Lambda$ is negative semi-definite.
Thus, if $H$ is positive definite and $\lambda_k > 0$, then $H-D_\Lambda$ is also positive definite and invertible. 
Finally, Theorem 2.1 in \cite{dyn1983numerical} concludes the invertibility of the problem.

\subsection{Geometries for Collision Detection Test} \label{subsec:ycb}

Fig.~\ref{fig:ycbobject} shows the objects utilized in the collision detection test conducted in Sec.~\ref{subsec:cdbenchmark}.

\begin{figure}[t] 
    \centering
    \subfigure[Visualization of \eqref{eq-vertexsuport}]{
    \includegraphics[width=3.0cm]{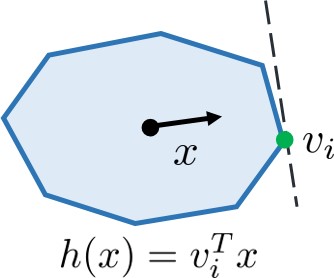}
    }
    \subfigure[Visualization of \eqref{eq:shbar}]{
    \includegraphics[width=5.0cm]{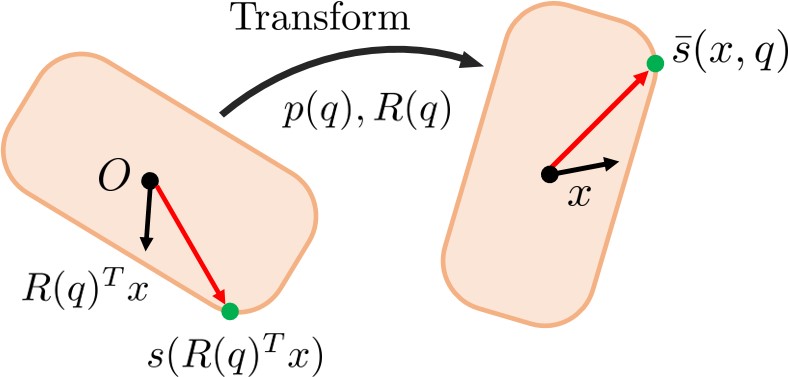}
    }
    \caption{Visualizations of equations. Left: Support function and point for a vertex set in \eqref{eq-vertexsuport}. Right: Support point for SE(3) transformation of body in \eqref{eq:shbar}.}
    \label{fig:advis}
\end{figure}

\begin{figure}[t] 
    \centering
    \subfigure{
    \includegraphics[width=4.1cm]{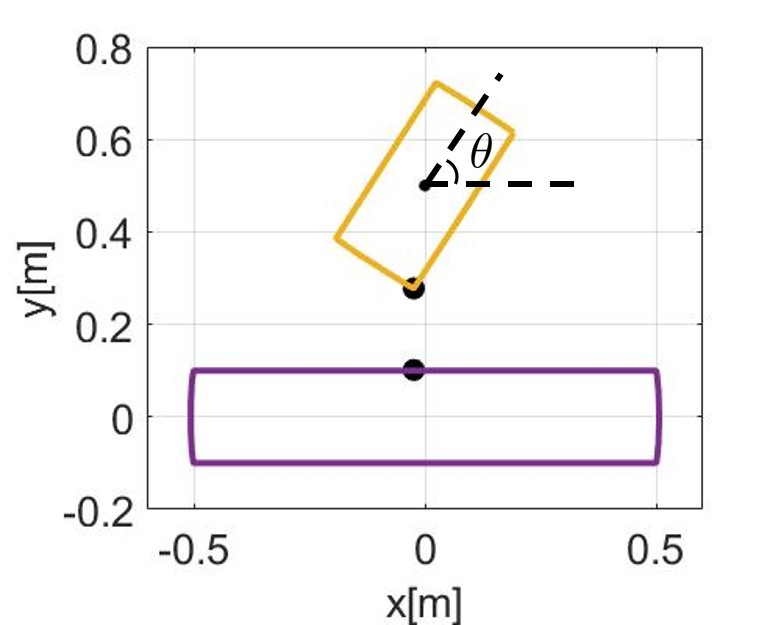}
    }
    \subfigure{
    \includegraphics[width=4.1cm]{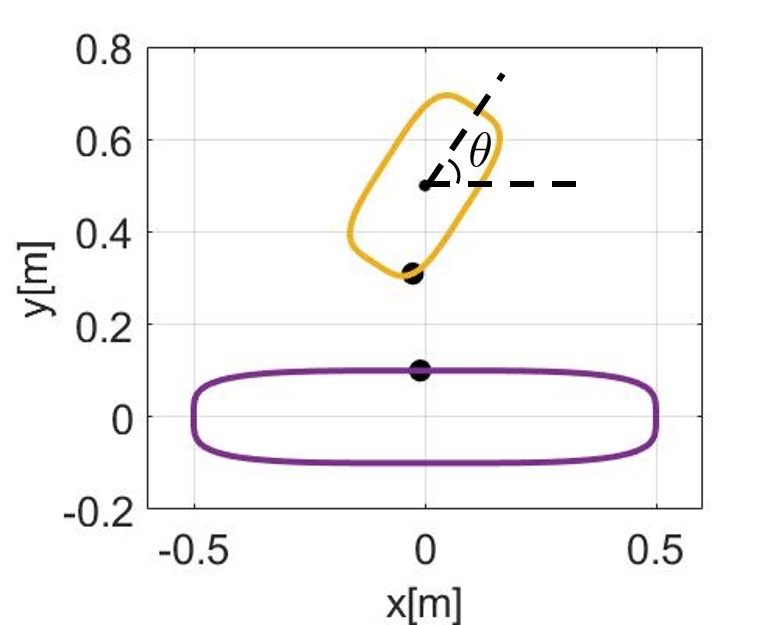}
    }
    \caption{Illustrations for the degeneration test in Sec.~\ref{subsec:degeneration}. Witness points (block dots) are recorded as the rotation angle $\theta$ changes. Left: our support function based modeling. Right: Superquadrics.}
    \label{fig:2dtest}
\end{figure}

\begin{figure}[t]
\centering
\subfigure[Apple]{
\includegraphics[width=2.2cm]{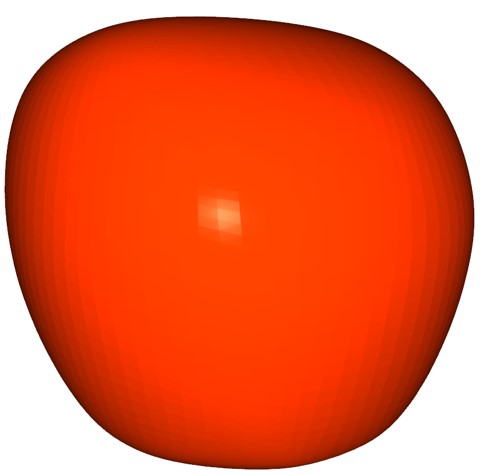}}
\subfigure[Mustard]{
\includegraphics[width=2.0cm]{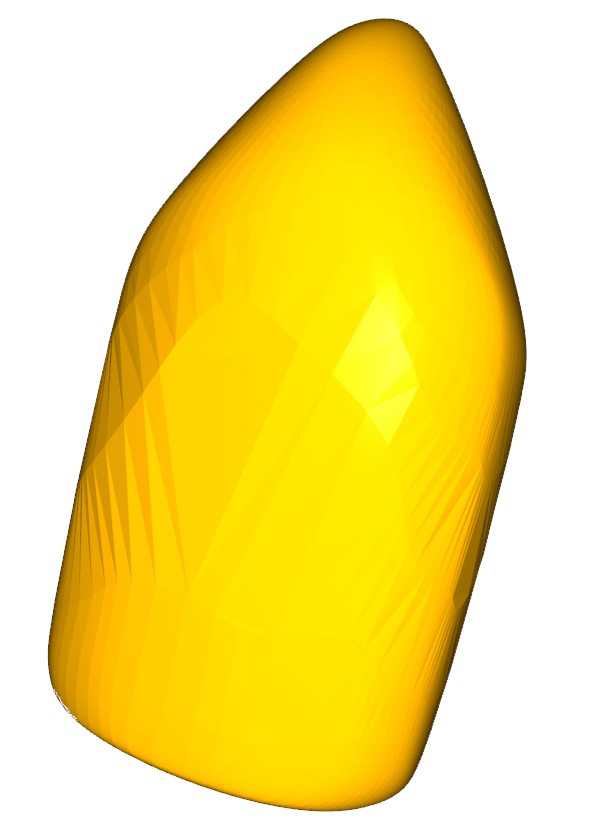}}
\subfigure[Sponge]{
\includegraphics[width=2.2cm]{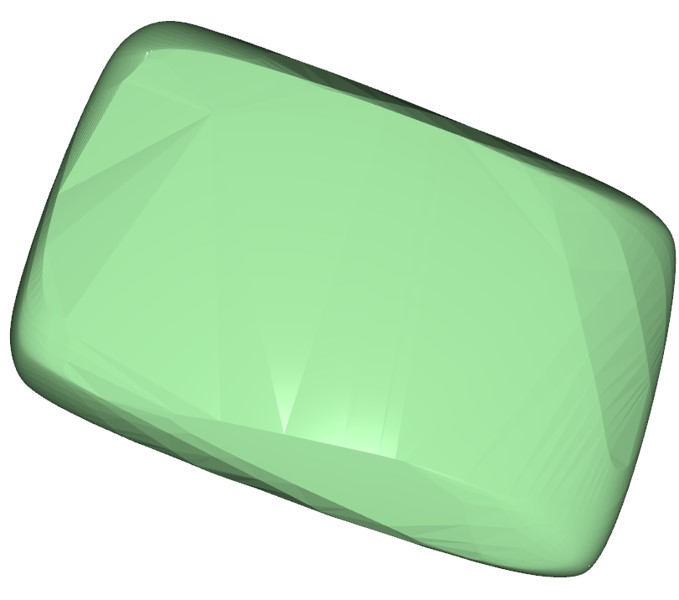}}
\caption{Images of the objects used in the collision detection benchmark}
\label{fig:ycbobject}
\end{figure}

\subsection{Additional details and results for peg-in-hole task} \label{subsec:appendixpeg}

\subsubsection{Cost landscape}

\begin{figure}[t]
\centering
\subfigure[Scenario1 (Left: after first touch, Right: after second touch)]{
\includegraphics[width=8.4cm]{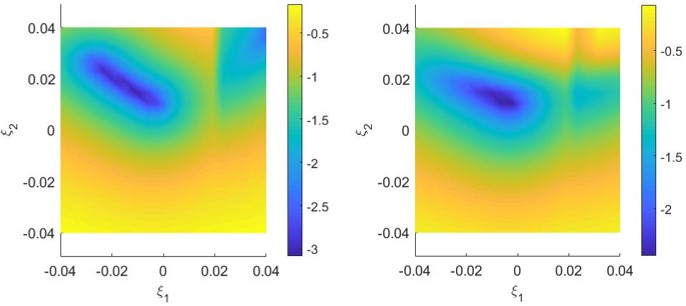}}
\subfigure[Scenario2 (Left: after first touch, Right: after second touch)]{
\includegraphics[width=8.4cm]{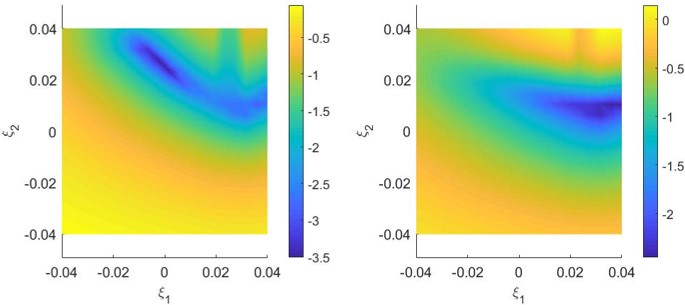}}
\caption{Images of the objects used in the collision detection benchmark}
\label{fig:costlandscape}
\end{figure}

To assess validity of the optimization-based formulation and the impact of multiple interactions, we visualize the cost landscape. 
For more intuitive interpretation, we assume that the uncertainty exists in the $x$ and $y$ positions of the hole, while the grasped peg pose are known (therefore, $\xi\in\mathbb{R}^2$). 
Also here, rectangular peg is employed.

Fig.~\ref{fig:costlandscape} illustrates the cost landscape obtained from our differentiable framework for the problem.
See also Fig.~\ref{fig:thumbnail} for the optimization path on the landscape.
As depicted, the solution initially obscured with multiple minima, becomes more apparent as interaction is added. This observation highlights the potential of our method in generating interesting results when integrated with active sensing. It suggests that incorporating additional interactions can enhance the identification and clarity of the optimal solution.

\subsubsection{Grasp parameterization}
\begin{figure}[t]
\centering
\subfigure{
\includegraphics[width=2.5cm]{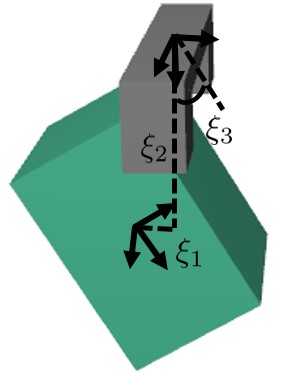}}
\caption{Parameterization of grasp pose for a rectangular peg. The parameterization is also similarly defined for a hexagonal peg.}
\label{fig:grasp_paramerterization}
\end{figure}

Fig.~\ref{fig:grasp_paramerterization} provides a visualization of how grasping is modeled. In the scenarios described in Sec.~\ref{subsec:peginhole}, the pose of the grasped peg can be effectively represented with just 3 parameters, offering intuitive understanding. However, for more intricate shapes of pegs and grippers, 6 parameters can be employed, while incorporating non-penetration constraints. Exploring scenarios that encompass these complexities would present an intriguing avenue for future research.

\subsubsection{Star-shaped geometry}

\begin{figure}[t]
\centering
    \centering
    \includegraphics[width=8.4cm]{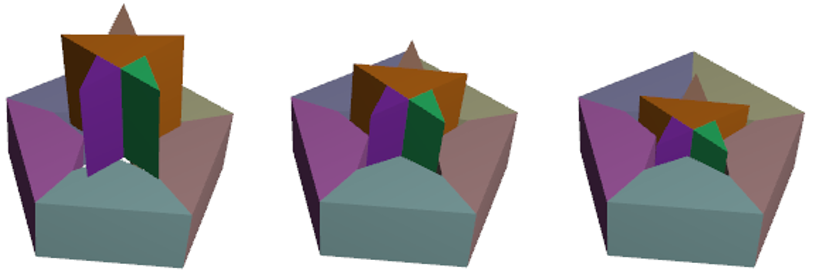}
    \caption{Snapshots of simulation results of star-shaped peg-in-hole manipulation using our uncertain pose estimation framework in online. Different colors are used to represent convex-decomposed shapes.}
    \label{fig:starpeg}
\end{figure}

%We implement a star-shaped peg-in-hole for testing on more complex geometries. Here, both the peg and the hole are split into 5 convexes and the predefined number of collisions is 25.
%We verify that successful identification and operations are performed for this situation and are visualized in Fig.~\ref{fig:starpeg}. 

To test our approach on more complex geometries, we implement a star-shaped peg-in-hole scenario. 
In this setup, both the peg and the hole are decomposed into five convex geometries, and a total of $25$ collisions are pre-defined. 
We validate the effectiveness of our method in successfully identifying and executing tasks in this scenario, as demonstrated in Fig.~\ref{fig:starpeg} and the supplementary video.

\end{document}